\begin{document}

\title{Gaussian Shading++: Rethinking the Realistic Deployment Challenge of Performance-Lossless Image Watermark for
Diffusion Models}

\author{Zijin~Yang,
        Xin~Zhang,
        Kejiang~Chen,
        Kai~Zeng,
        Qiyi~Yao,
        Han~Fang,
        Weiming~Zhang,
        Nenghai~Yu
\thanks{Z. Yang, X. Zhang, K. Chen, Q. Yao, W. Zhang, N. Yu are with University of Science and Technology of China,
Hefei 230026, China and Anhui Province Key Laboratory of Digital Security (email: \{bsmhmmlf@mail., XinZhang1999@mail., chenkj@, qyyao@mail., zhangwm@, ynh@\}ustc.edu.cn).}
\thanks{K. Zeng is with the Department of Information Engineering and Mathematics, University of Siena, Siena, Italy (email: kai.zeng@unisi.it).}
\thanks{H. Fang is with the School of Computing, National University of Singapore, 117417, Singapore (email: fanghan@nus.edu.sg).}
\thanks{Zijin Yang and Xin Zhang contributed equally to this work.}
\thanks{Corresponding author: Kejiang Chen.}}

\maketitle

\begin{abstract}
Ethical concerns surrounding copyright protection and inappropriate content generation pose challenges for the practical implementation of diffusion models. One effective solution involves watermarking the generated images. Existing methods primarily focus on ensuring that watermark embedding does not degrade the model performance. However, they often overlook critical challenges in real-world deployment scenarios, such as the complexity of watermark key management, user-defined generation parameters, and the difficulty of verification by arbitrary third parties. To address this issue, we propose Gaussian Shading++, a diffusion model watermarking method tailored for real-world deployment. We propose a double-channel design that leverages pseudorandom error-correcting codes to encode the random seed required for watermark pseudorandomization, achieving performance-lossless watermarking under a fixed watermark key and overcoming key management challenges. Additionally, we model the distortions introduced during generation and inversion as an additive white Gaussian noise channel and employ a novel soft decision decoding strategy during extraction, ensuring strong robustness even when generation parameters vary. To enable third-party verification, we incorporate public key signatures, which provide a certain level of resistance against forgery attacks even when model inversion capabilities are fully disclosed. Extensive experiments demonstrate that Gaussian Shading++ not only maintains performance losslessness but also outperforms existing methods in terms of robustness, making it a more practical solution for real-world deployment.

\end{abstract}

\begin{IEEEkeywords}
Image watermark, Diffusion models, Performance-lossless.
\end{IEEEkeywords}    
\section{Introduction}
\IEEEPARstart{D}{}iffusion models~\cite{sohl2015deep,song2019generative,song2020score,ho2020denoising,song2020denoising} signify a noteworthy leap forward in image generation. These well-trained diffusion models, especially commercial models like Stable Diffusion (SD)~\cite{rombach2022high}, Glide~\cite{nichol2021glide}, and Muse AI~\cite{rombach2022high}, enable individuals with diverse backgrounds to create high-quality images effortlessly. However, this raises concerns about intellectual property and whether diffusion models will be stolen or resold twice. 

On the other hand, the ease of generating realistic images raises concerns about potentially misleading content generation. For example, on May 23, 2023, a Twitter-verified user named Bloomberg Feed posted a tweet titled ``Large explosion near the Pentagon complex in Washington DC-initial report," along with a synthetic image. This tweet led to multiple authoritative media accounts sharing it, even causing a brief impact on the stock market\footnote{\href{https://www.cnn.com/2023/05/22/tech/twitter-fake-image-pentagon-explosion/index.html}{Fake image of Pentagon explosion on Twitter.}}. On June 13, 2024, the European Union enacted the Artificial Intelligence Act, which mandates the implementation of technical safeguards including watermarks to prevent AI-generated content from misleading the public, ensuring transparency and credibility in the information ecosystem.\footnote{\href{https://eur-lex.europa.eu/legal-content/EN/TXT/?uri=CELEX:32024R1689}{Artificial Intelligence Act: Regulation (EU) 2024/1689 of the European Parliament and of the Council.}}. The urgency of labeling generated content for copyright authentication and prevention of misuse is evident.

 \begin{figure}[t]
  \centering
\includegraphics[width=\linewidth]{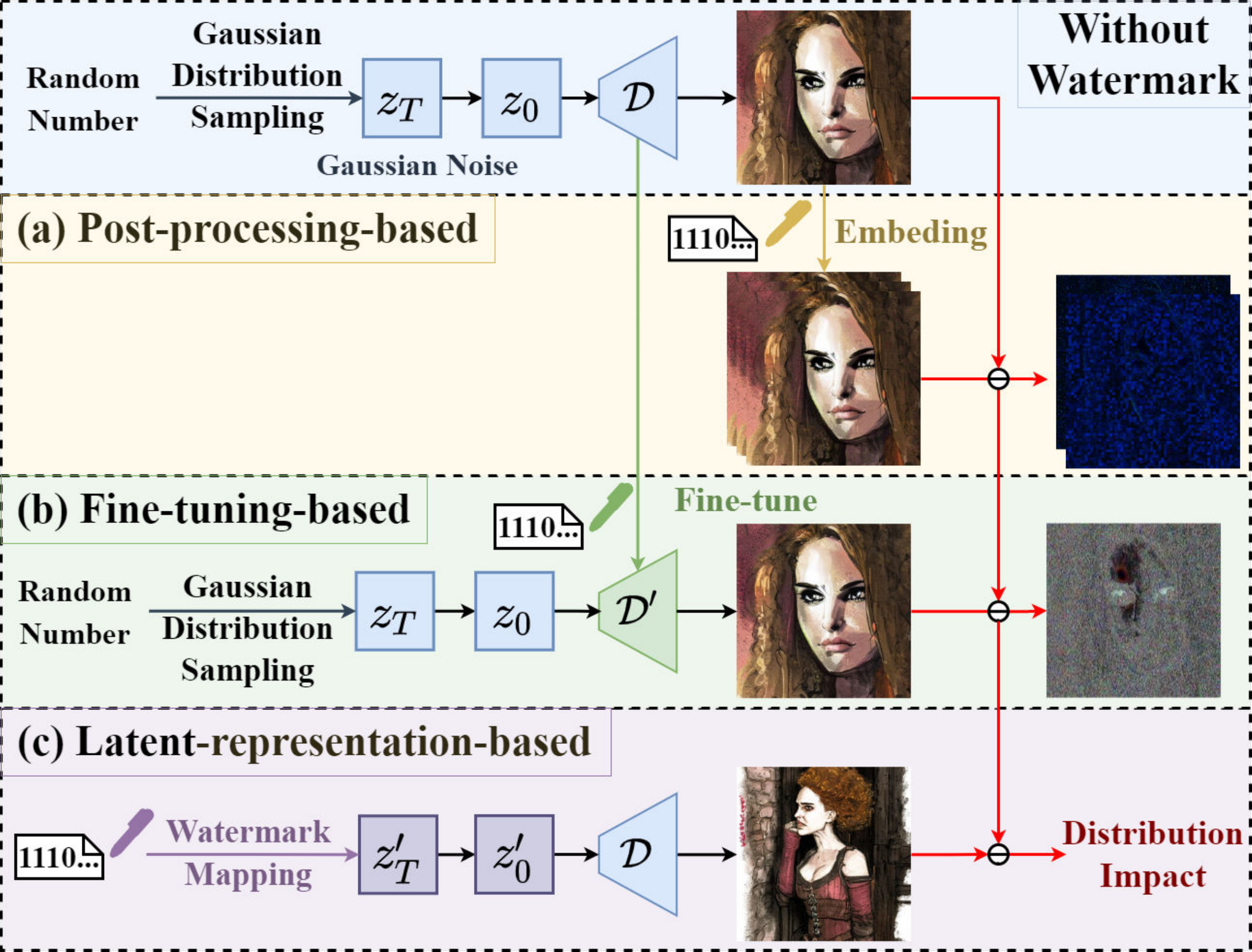}
   \caption{Existing watermarking frameworks can be divided into three categories: (a) post-processing-based, (b) fine-tuning-based, and (c) latent-representation-based. Since methods (a) and (b) either introduce watermark residuals or require additional computational overhead, method (c) has emerged as the mainstream approach by overcoming these two drawbacks. Their performance is primarily evaluated based on the impact on the distribution. }
   \label{fig:watermark}
\end{figure}

Watermarking is highlighted as a fundamental method for labeling generated content, as it embeds watermark information within the generated image, allowing for subsequent copyright authentication and the tracking of false content. Existing watermarking methods for the diffusion model can be divided into three categories, as shown in Fig.~\ref{fig:watermark}. Post-processing-based methods~\cite{cox2007digital,zhang2019robust,zhu2018hidden,luo2020distortion,zhang2020udh,kishore2021fixed, tancik2020stegastamp,zhong2020automated,jia2021mbrs,ma2022towards,fernandez2022watermarking,fang2022pimog,fang2023flow} adjust robust image features to embed watermarks, thereby directly altering the image and degrading its quality. 
To mitigate this concern, recent research endeavors propose fine-tuning-based methods~\cite{fernandez2023stable,zhao2023recipe,liu2023watermarking,cui2023diffusionshield,xiong2023flexible}, which amalgamate the watermark embedding process with the image generation process. Intuitively, these methods need to modify model parameters, introducing supplementary computational overhead. 

To address two main concerns of visible watermark residuals in generated images and excessive computational overhead inherent in prior methods,  latent-representation-based methods~\cite{wen2023tree, gs,prc} has emerged as a promising solution. These methods ensure that watermark information remains invisible in the image and offer plug-and-play functionality without requiring training, making it an increasingly important focus of research. Wen et al.~\cite{wen2023tree} introduced the Tree-Ring Watermark (TRW), the first of its kind, which embeds information by modifying latent representations to align with specific patterns. However, it restricts the randomness of sampling, which impacts the generative performance. Our earlier work, Gaussian Shading~\cite{gs}, resolves this limitation by incorporating pseudorandom keys and distribution-preserving sampling during the mapping of watermark information to latent representations. This ensures that the distribution of watermarked images matches that of non-watermarked images, achieving the first provable performance-lossless watermarking scheme. However, Gaussian Shading~\cite{gs} requires assigning a unique pseudorandom key to each image, which introduces significant challenges in key management in practical implementation. To tackle the key management issue, Gunn et al.~\cite{prc} proposed an undetectable watermarking method called the pseudorandom error-correcting codes watermark (PRCW). The core of PRCW~\cite{prc} lies in pseudorandom error-correcting codes (PRC)~\cite{christ2024pseudorandom}, where the generator matrix and parity check matrix serve as the watermark key. Even when the watermark key is fixed, PRCW~\cite{prc} can generate pseudorandom bitstreams, which are then mapped to latent representations that follow a standard normal distribution. This effectively resolves the key management problem. 

However, our experiments reveal that the robustness of PRCW~\cite{prc} is highly sensitive to the \textit{ guidance\_scale} parameter. As shown in Tab.~\ref{tab:com_gs}, PRCW~\cite{prc} only demonstrates strong robustness when the \textit{guidance\_scale} values used during generation and inversion are identical. Performance degrades significantly when there is a mismatch between these values. This poses a challenge for real-world deployment, as platforms typically allow users to customize generation parameters, which are unknown during watermark verification.

Based on the above analysis, latent-representation-based watermarking methods currently face three main challenges in practical deployment: First, how to achieve performance-lossless watermarking, where the distribution of watermarked images aligns with that of non-watermarked images, with a fixed watermark key to simplify key management. Second, watermarking schemes should accommodate user-customized generation parameters and maintain robust performance even when mismatches occur between the generation and verification phases. Lastly, many existing watermarking schemes depend on operators for both watermark embedding and verification, introducing potential security and trustworthiness risks in the authentication process~\cite{fairoze2025difficulty}. Consequently, watermark verification needs to be publicly accessible to any third party.

To address the aforementioned three challenges, we propose Gaussian Shading++,  a more practical performance-lossless watermarking scheme designed for real-world deployment scenarios. 
Specifically, during watermark embedding, we propose a double-channel design, in which the latent space is evenly divided along the channel dimension into two parts: the GS Channel and the PRC Channel. The GS Channel continues to use Gaussian Shading~\cite{gs}, where the robustness is enhanced by replicating the watermark content. The PRC Channel serves as the watermark header, encoding the seed for the pseudorandom number generator (PRNG) used in Gaussian Shading~\cite{gs} via PRC~\cite{christ2024pseudorandom}. The actual watermark key is the generator matrix and parity check matrix of PRC, and a private key that is used together with the seed to drive the PRNG. While the watermark key remains fixed, the seed can vary freely. The fixed watermark key does not compromise the pseudorandomness of the PRC Channel, and the randomly sampled seed ensures the pseudorandomness of the GS Channel. Thus, the entire watermark remains pseudorandom even with a fixed watermark key. Under the effect of distribution-preserving sampling, both the latent representations mapped by the watermark and those from random sampling follow a standard Gaussian distribution. Since the subsequent generation process remains unchanged, the distribution of watermarked images aligns with that of non-watermarked images, achieving performance-lossless watermarking while addressing the key management issue.

During watermark extraction, inspired by PRCW~\cite{prc}, we employ a more accurate Exact Inversion method~\cite{Hong_2024_CVPR} to recover the latent representations from images. The seed is then extracted from the PRC Channel. Next, we propose a novel soft decision decoding strategy for the GS Channel, which significantly improves the hard decision method in Gaussian Shading~\cite{gs} that employs direct binarization of latent representations. By modeling the entire generation and inversion process as an additive white Gaussian noise channel (AWGN Channel), we reformulate watermark recovery as a maximum likelihood decoding problem of repetition codes (REP codes)~\cite{ryan2009channel}.
Specifically, each watermark bit undergoes multiple embeddings in the latent space, and we compute the posterior expectation of each repeated instance based on the estimated noise. These expectations are then aggregated to form a log-likelihood ratio (LLR) for each bit, enabling near-optimal soft decision decoding that equivalently performs maximum a posteriori (MAP) estimation under the assumption of independent Gaussian noise.
Thanks to the accurate inversion, efficient modeling, and soft decision decoding, the robustness of Gaussian Shading++ is further improved, ensuring that the watermark can still be correctly extracted even when the parameter \textit{guidance\_scale} varies.

To further extend the scheme to third-party verifiability, it is necessary to make the model inversion capability public, which implies that third parties can access the latent representations, thereby introducing the risk of forgery using surrogate models~\cite{müller2024blackbox}. To address this, we incorporate the public-key signature ECDSA~\cite{ECDSA} into the framework, using a private signing key to sign the user information in the GS Channel. During extraction, the extracted watermark from the GS Channel is further verified using a public verification key to authenticate the signature. After forgery, the watermark accuracy of the image decreases, making it difficult to pass the signature verification. This ensures that Gaussian Shading++ can resist a certain degree of forgery attacks~\cite{müller2024blackbox}.

To demonstrate the effectiveness of our method, we evaluate Gaussian Shading++ against traditional distortions and neural network-based removal attacks~\cite{balle2018variational,cheng2020learned,esser2021taming,rombach2022high}, comparing it with several state-of-the-art methods to validate its superior robustness. Additionally, we highlight the performance-lossless nature of Gaussian Shading++ by comparing both the visual quality of images and the distribution of latent representations. Furthermore, Gaussian Shading++ exhibits strong robustness even in scenarios where the \textit{guidance\_scale} parameter varied during generation. Finally, we assess the robustness of Gaussian Shading++ in third-party verifiable scenarios, demonstrating a certain level of resilience against existing forgery attacks~\cite{müller2024blackbox}. Overall, Gaussian Shading++ demonstrates strong practicality in real-world applications.

 To summarize, our contributions are as follows:
 
 \begin{itemize}
    \item We propose a double-channel design, utilizing the PRC Channel to encode the random seed required for the pseudorandomization of the GS Channel. This approach overcomes the limitations of complex key management in Gaussian Shading and achieves performance-lossless even with a fixed watermark key.

    \item By building on the prior modeling of the generation and inversion process as the AWGN Channel, we propose a novel soft decision decoding strategy for the maximum likelihood decoding in REP codes. This near-optimal approach effectively enhances the robustness of the GS Channel, enabling Gaussian Shading++ to achieve excellent performance across varying generation parameters.

    \item By introducing the public-key signature ECDSA, we extend the application scenario to enable verification by any third party, while providing a certain level of resistance against existing forgery attacks.

    \item The experimental results demonstrate that our method outperforms state-of-the-art methods in terms of both robustness and performance losslessness, further advancing the practical applicability of watermarking in diffusion models.

 \end{itemize}
\section{Related Work}

\subsection{Diffusion Models}

Inspired by nonequilibrium thermodynamics~\cite{sohl2015deep}, Ho et al.~\cite{ho2020denoising} introduced the Denoising Diffusion Probabilistic Model (DDPM). DDPM consists of two Markov chains used for adding and removing noise, and subsequent works~\cite{song2020denoising,dhariwal2021diffusion, nichol2021improved,lu2022dpm,gu2022vector,ho2022classifier,rombach2022high, podell2023sdxl} have adopted this bidirectional chain framework. To reduce computational complexity and improve efficiency, the Latent Diffusion Model (LDM)~\cite{rombach2022high} was designed, in which the diffusion process occurs in a latent space $\mathcal{Z}$. To map an image $x \in \mathbb{R}^{H\times W\times 3}$ to the latent space, the LDM employs an encoder $\mathcal{E}$, such that $z_0 = \mathcal{E}(x) \in \mathbb{R}^{h\times w\times ch}$. Similarly, to reconstruct an image from the latent space, a decoder $\mathcal{D}$ is used, such that $x = \mathcal{D}(z_0)$. A pretrained LDM can generate images without the encoder $\mathcal{E}$. Specifically, a latent representation $z_T$ is first sampled from a standard Gaussian distribution $ \mathcal{N}(0,I)$. Subsequently, through iterative denoising using methods like DDIM~\cite{song2020denoising}, $z_0$ is obtained, and an image can be generated using the decoder: $x = \mathcal{D}(z_0)$.

\subsection{Diffusion Inversion}
Diffusion inversion~\cite{song2020denoising,wallace2023edict,zhang2024exact, Hong_2024_CVPR} can be regarded as the inverse process of generation, recovering latent representations from images for various downstream tasks, such as image editing~\cite{hertz2022prompt,mokady2023null,zhang2023inversion,mo2024freecontrol,li2024self,chung2024style} and watermark detection~\cite{wen2023tree,gs,prc}. The most native approach is DDIM Inversion~\cite{song2020denoising}, which simply reverses the time axis and employs the same sampling method as DDIM generation. However, DDIM Inversion struggles with stable reconstruction of real images, potentially leading to incorrect image reconstruction in downstream tasks. Wallace et al.~\cite{wallace2023edict} proposed maintaining two coupled latent representations and achieving precise inversion of real and generated images through an alternating approach. Zhang et al.~\cite{zhang2024exact} introduced a bi-directional integration approximation method to perform exact diffusion inversion. However, while improving inversion accuracy, these methods modify the sampling process and are not universally applicable to images generated by common sampling methods~\cite{song2020denoising,lu2022dpm}. Hong et al.~\cite{Hong_2024_CVPR} achieved a more general diffusion inversion method by utilizing gradient descent or forward step methods, further enhancing reconstruction accuracy.

\subsection{Image Watermark}

Digital watermark~\cite{van1994digital} is an effective means to address copyright protection and content authentication by embedding copyright or traceable identification information within carrier data. Typically, the functionality of a watermark depends on its capacity. For example, a single-bit watermark can determine whether an image was generated by a particular diffusion model, i.e., copyright protection; a multi-bit watermark can further determine which user of the diffusion model generated the image, i.e., traceability.

Image watermark is a method that employs images as carriers for the watermark. Initially, watermark embedding methods primarily focused on the spatial domain~\cite{van1994digital}, but later, to enhance robustness, transform domain watermarking techniques~\cite{kundur1997robust,tsai2000joint,guo2002digital,lee2007reversible,al2007combined,stankovic2009application,hamidi2018hybrid} were developed. In recent years, with the advancement of deep learning, researchers have turned their attention to neural networks~\cite{lecun1998gradient,vaswani2017attention}, harnessing their powerful learning capabilities to develop watermarking techniques~\cite{zhu2018hidden,luo2020distortion,zhang2020udh,kishore2021fixed, tancik2020stegastamp,zhong2020automated,jia2021mbrs,ma2022towards,fernandez2022watermarking,fang2022pimog,fang2023flow}.

\subsection{Image Watermark for Diffusion Models}

Existing image watermarking methods for the diffusion model~\cite{cox2007digital,zhang2019robust,zhu2018hidden,luo2020distortion,zhang2020udh,kishore2021fixed, tancik2020stegastamp,zhong2020automated,jia2021mbrs,ma2022towards,fernandez2022watermarking,fang2022pimog,fang2023flow,fernandez2023stable,zhao2023recipe,liu2023watermarking,cui2023diffusionshield,xiong2023flexible,wen2023tree,gs,prc} can be divided into three categories, as shown in Fig.~\ref{fig:watermark}. The image watermarking methods described in the previous section can be applied directly to the images generated by the diffusion model, which is called post-processing-based watermarks~\cite{cox2007digital,zhang2019robust,zhu2018hidden,luo2020distortion,zhang2020udh,kishore2021fixed, tancik2020stegastamp,zhong2020automated,jia2021mbrs,ma2022towards,fernandez2022watermarking,fang2022pimog,fang2023flow}. These methods directly modify the image, thus degrading image quality. Recent research endeavors have amalgamated the watermark embedding process with the image generation process to mitigate this issue. Stable Signature~\cite{fernandez2023stable} fine-tunes the LDM decoder using a pre-trained watermark extractor, facilitating watermark extraction from images produced by the fine-tuned model. Zhao et al.~\cite{zhao2023recipe} and Liu et al.~\cite{liu2023watermarking} suggest fine-tuning the diffusion model to implant a backdoor as a watermark, enabling watermark extraction by triggering. These fine-tuning-based approaches enhance the quality of watermarked images but introduce supplementary computational overhead and modify model parameters. 

To address the limitations of the aforementioned two types of methods, Wen et al.~\cite{wen2023tree} proposed the first latent-representation-based method named the Tree-Ring Watermark (TRW), which conveys copyright information by adapting the frequency domain of latent representations to match specific patterns. This method achieves an imperceptible watermark. However, it directly disrupts the Gaussian distribution of noise, limiting the randomness of sampling and resulting in affecting model performance. In our previous work, Gaussian Shading~\cite{gs}, we introduced a stream cipher and distribution-preserving sampling to ensure that the distribution of watermarked images matches that of non-watermarked images, achieving a performance-lossless watermarking scheme. However, Gaussian Shading requires assigning a unique stream key to each image, leading to key management challenges. Gunn et al.~\cite{prc} proposed the pseudorandom error-correcting codes watermark (PRCW), which uses pseudorandom error-correcting codes~\cite{christ2024pseudorandom} to encode watermark information. This approach allows the generation of latent representations that conform to a standard Gaussian distribution even when the watermark key is fixed, effectively addressing the key management issue in the performance-lossless watermark. However, the robustness of PRCW degrades when faced with variations in generation parameters.

In this paper, we aim to design a watermarking method that achieves performance-lossless even with a fixed watermark key, while maintaining strong robustness under scenarios with varying generation parameters. Additionally, considering practical applicability, the method should ideally be extendable to support verification by any third party.

\subsection{Pseudorandom Error-correcting Codes Based on LDPC Codes} \label{sec:ldpc-prc}
To address the key reuse issue in Gaussian Shading~\cite{gs} and enable consistent performance-lossless capabilities across multiple image generations, we introduce the pseudorandom error-correcting codes (PRC) based on LDPC codes~\cite{christ2024pseudorandom} to embed a header composed of random seeds.

The key generation, encoding, and decoding procedures of the PRC based on LDPC codes are as follows:

\begin{itemize} 
    \item \texttt{KeyGen}: $(n, g, t, r) \mapsto (P, G) $
    \begin{itemize} 
        \item Sample a random matrix $P \in \mathbb{F}^{r \times n}_2$ (parity-check matrix) subject to every row of $P$ being $t$-sparse.
        \item Sample a random matrix $G \in \mathbb{F}^{n \times g}_2$ (generator matrix) subject to $PG=0$.
    \end{itemize}
    \item \texttt{Encode}$(m)$: Given message $m \in \mathbb{F}_2^g$, sample noise $e \leftarrow \operatorname{Ber}(n, \eta)$, and output ciphertext $c = Gm \oplus e$.
    \item \texttt{Decode}$(L)$: Given a vector of posterior soft information $L = \left( \ell_1, \ell_2, \dots, \ell_n \right)$ (e.g., $\ell_i = \mathbb{E}[m_i\mid c']$), apply the BP-OSD decoder to recover the original message $\hat{m}$.
\end{itemize}

The above definition corresponds to the standard regular LDPC codes, where the sparsity parameter $t$ is fixed. To satisfy the requirements of PRC, the following constraint must be imposed:

\begin{itemize} 
    \item The sparsity is set as $t = \Theta(\log n)$, and each execution of \texttt{Encode} samples a fresh message $m$ uniformly at random from $\mathbb{F}_2^g$.
\end{itemize}

When the above constraints are satisfied, the output of \texttt{Encode} is pseudorandom under either \textbf{the subexponential Learning Parity with Noise (LPN) assumption}, or under \textbf{the standard LPN assumption} combined with \textbf{the planted XOR assumption} (see in \cite{christ2024pseudorandom}); that is: For any polynomial-time adversary $\mathcal{A}$,
\begin{equation}
    \left|\underset{P,G}{\operatorname{Pr}}\left[\mathcal{A}^{\text {Encode}(\cdot)}\left(1^{\lambda}\right)=1\right]-\underset{\mathcal{U}}{\operatorname{Pr}}\left[\mathcal{A}^{\mathcal{U}}\left(1^{\lambda}\right)=1\right]\right| \leq \operatorname{negl}(\lambda).
\end{equation}

In later sections, we will show that by choosing an appropriate sparsity parameter $t$ and using a randomly sampled seed from $\mathbb{F}_2^{g}$ as the input to the PRC encode process, each image generation involves fresh randomness. As a result, our construction inherits the pseudorandomness guarantees established in prior theoretical work.
\section{Threat Model}

In this section, we introduce the threat model considered for the proposed method. As illustrated in Fig.~\ref{fig:scena}, it is divided into two scenarios: Operator Verification and Third-party Verification. We also present the Watermark Statistical Test for detection and traceability tasks.

\begin{figure*}[t]
  \centering
   \includegraphics[width=\linewidth]{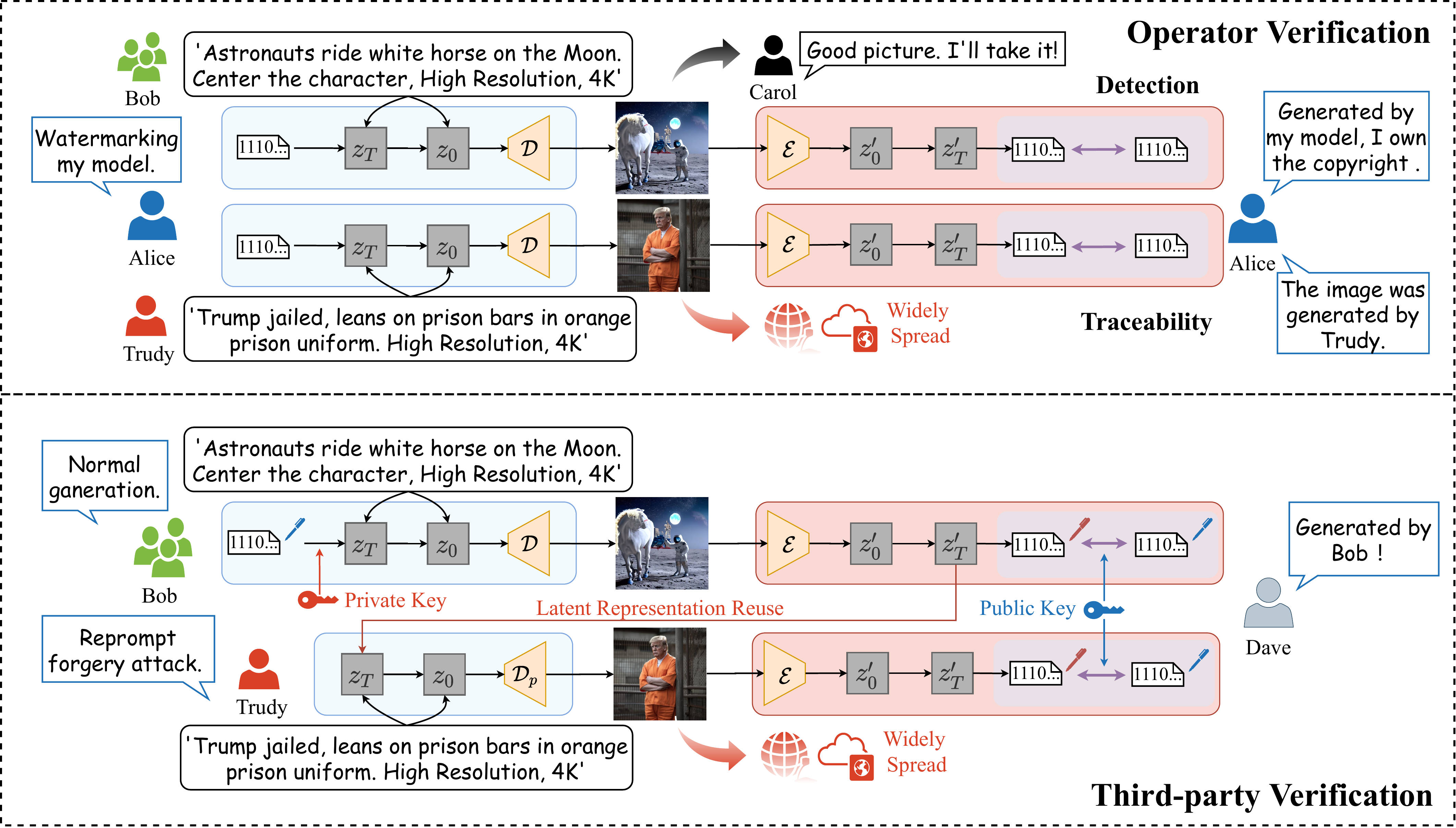}
   \caption{
   The two application scenarios of Gaussian Shading++ are Operator Verification and Third-party Verification. In the Operator Verification scenario, Gaussian Shading++ considers satisfying the requirements of generated image detection (copyright protection) and malicious user traceability. In the Third-party Verification scenario, Gaussian Shading++ considers the need for any third party to verify the watermark, and it aims to defend against the reprompt forgery attack that may emerge once model inversion capabilities become publicly available.}
   \label{fig:scena}
\end{figure*}

\subsection{Operator Verification}

As shown in Fig.~\ref{fig:scena}, the scenario involves the operator Alice, the thief Carol, and two types of users Bob and Trudy. 
\subsubsection{Operator Alice} Alice is responsible for training the model, deploying it on the platform, and providing the corresponding API for users, but she does not open-source the code or model weights. On one hand, to fulfill the detection (copyright protection) requirement, Alice embeds a single-bit watermark into each generated image. The successful extraction of the watermark from an image serves as evidence of Alice's rightful ownership of the copyright, while also indicating that the image is artificially generated (as opposed to natural images). On the other hand, to meet the traceability requirement, Alice assigns a unique watermark to each user. By extracting the watermark from illicit content, it becomes possible to trace malicious user Trudy through comparison with the watermark database. Traceability represents a higher-level objective than detection and can also achieve copyright protection for different users.

\subsubsection{Community user Bob} Bob faithfully adheres to the community guidelines, utilizing the API provided by Alice to generate and disseminate images.

\subsubsection{Thief Carol} Carol does not use Alice's services but steals images generated by her model, claiming ownership of the copyrights.

\subsubsection{Malicious user Trudy} Trudy uses the API provided by Alice to generate deepfakes and infringe content. To evade detection and traceability, Trudy can employ various data augmentation techniques to modify illicit images.

\subsection{Third-party Verification}

As shown in Fig.~\ref{fig:scena}, the scenario involves the operator Alice, any third party Dave who wishes to verify the watermark, and two types of users Bob and Trudy.

\subsubsection{Operator Alice} Alice provides users with an API, but in the Third-party Verification scenario, she must expose the model's inversion capability. To mitigate the risk of malicious user Trudy exploiting the watermark for forgery attacks, Alice can maintain a public-private key pair. During the generation process, Alice uses the private signing key to sign the user information, and this signature, combined with the user information, forms the watermark information. When a third party Dave, needs to verify the watermark, Alice can provide the public verification key directly or store it in a digital certificate, making it readily available for Dave's use. Additionally, to enable Dave to trace back to the target user based on the user information after signature verification, Alice also needs to maintain a public watermark database to facilitate Dave's query operations.

 \subsubsection{Any third party Dave} Dave can be any third party seeking to verify the watermark, and he requires both Alice's public verification key and access to the publicly available model inversion capability. Once the signature verification is successful, Dave can trace the user information in the watermark database maintained by Alice.

\subsubsection{Community user Bob} Bob faithfully adheres to the community guidelines, utilizing the API provided by Alice to generate and disseminate images. Bob's user information is signed using Alice's private signing key, and his user information is made publicly available in the watermark database.

\subsubsection{Malicious user Trudy} Trudy's goal is to disguise illicit content as being generated by other users. Although Alice has made the watermark database publicly accessible, Trudy cannot directly forge content without access to Alice's private key. Therefore, Trudy can only exploit the publicly available model inversion capability to obtain the target user's latent representations. By leveraging a proxy model $\mathcal{D}_p$, Trudy can reuse these latent representations to regenerate content under illicit prompts, thereby executing a reprompt forgery attack~\cite{müller2024blackbox}.

\subsection{Watermark Statistical Test}

\subsubsection{Detection} Alice embeds a single-bit watermark, represented by $q$-bit binary watermark $s \in \{0,1\}^q$, into each generated image using Gaussian Shading++. This watermark serves as an identifier for her model. Assuming that the watermark $s'$ is extracted from the image $X$, the detection test for the watermark can be represented by the number of matching bits between two watermark sequences, $Acc(s,s')$. When the threshold $\tau \in \{0,\dots,q\}$ is determined, if $Acc(s,s') \geq \tau$, it is deemed that $X$ contains the watermark.

In previous works~\cite{yu2021artificial}, it is commonly assumed that the extracted watermark bits $s'_1,\dots,s'_q$ from the vanilla images are independently and identically distributed, with $s'_i$ following a Bernoulli distribution with parameter $0.5$. Thus, $Acc(s,s')$ follows a binomial distribution $\operatorname{Ber}(q,0.5)$. 

Once the distribution of $Acc(s,s')$ is determined, the false positive rate ($\operatorname{FPR}$) is defined as the probability that $Acc(s,s')$ of a vanilla image exceeds the threshold $\tau$. This probability can be further expressed using the regularized incomplete beta function $B_x(a;b)$~\cite{fernandez2023stable},
\begin{equation}
    \begin{aligned}
    \operatorname{FPR}(\tau) &=\operatorname{Pr}\left(Acc\left(s, s^{\prime}\right)>\tau \right) =\frac{1}{2^{q}}\sum_{i=\tau+1}^{q} \left(\begin{array}{c}q \\i\end{array}\right)\\
    &=B_{1 / 2}(\tau+1, q-\tau) .\label{smeq:detection}
\end{aligned}
\end{equation}

\subsubsection{Traceability} To enable traceability, Alice needs to assign a watermark $s^i \in \{0,1\}^q$ to each user, where $i=1,\dots,N$ and $N$ represents the number of users. During the traceability test, the bit matching count $Acc(s^1,s'),\dots, Acc(s^N,s')$ needs to be computed for all $N$ watermarks. If none of the $N$ tests exceed the threshold $\tau$, the image is considered not generated by Alice's model. However, if at least one test passes, the image is deemed to be generated by Alice's model, and the index with the maximum matching count is traced back to the corresponding user, i.e., $\operatorname*{argmax}_{i=1,\dots, N}Acc(s^i,s')$. When a threshold $\tau$ is given, the $\operatorname{FPR}$ can be expressed as follows~\cite{fernandez2023stable},
\begin{equation}
    \operatorname{FPR}(\tau, N) = 1 - (1-\operatorname{FPR}(\tau))^N \approx N \cdot \operatorname{FPR}(\tau) . \label{smeq:trace}
\end{equation}

\section{Proposed Method}
In this section, we first introduce the workflow of the proposed method, Gaussian Shading++, as illustrated in Fig.~\ref{fig:framework}. We propose a double-channel design by partitioning the latent space into two components: the PRC Channel and the GS Channel. In the watermark key generation phase, a ternary key set needs to be generated for the PRC channel. In the Third-party Verification scenario, we also need to introduce the public-key signature ECDSA~\cite{ECDSA} key pair to defend against forgery attacks~\cite{müller2024blackbox}. During embedding, the PRC Channel serves as the watermark header, encoding the random seed required for the GS Channel. The GS Channel enhances robustness by performing diffusion on the actual watermark message. The two Channels are then merged and used to drive distribution-preserving sampling, followed by denoising to generate watermarked images. During extraction, Exact Inversion~\cite{Hong_2024_CVPR} is employed to recover the latent representation. The distortion throughout the entire generation and inversion process is modeled as an AWGN Channel, enabling posterior estimation of the latent representation symbols. Subsequently, the random seed in the PRC Channel is first recovered, followed by decryption and soft decision decoding to extract the watermark from the GS Channel.  At the end of this section, we provide theoretical proof of the performance-lossless characteristic of Gaussian Shading++.

\begin{figure*}[t]
  \centering
   \includegraphics[width=\linewidth]{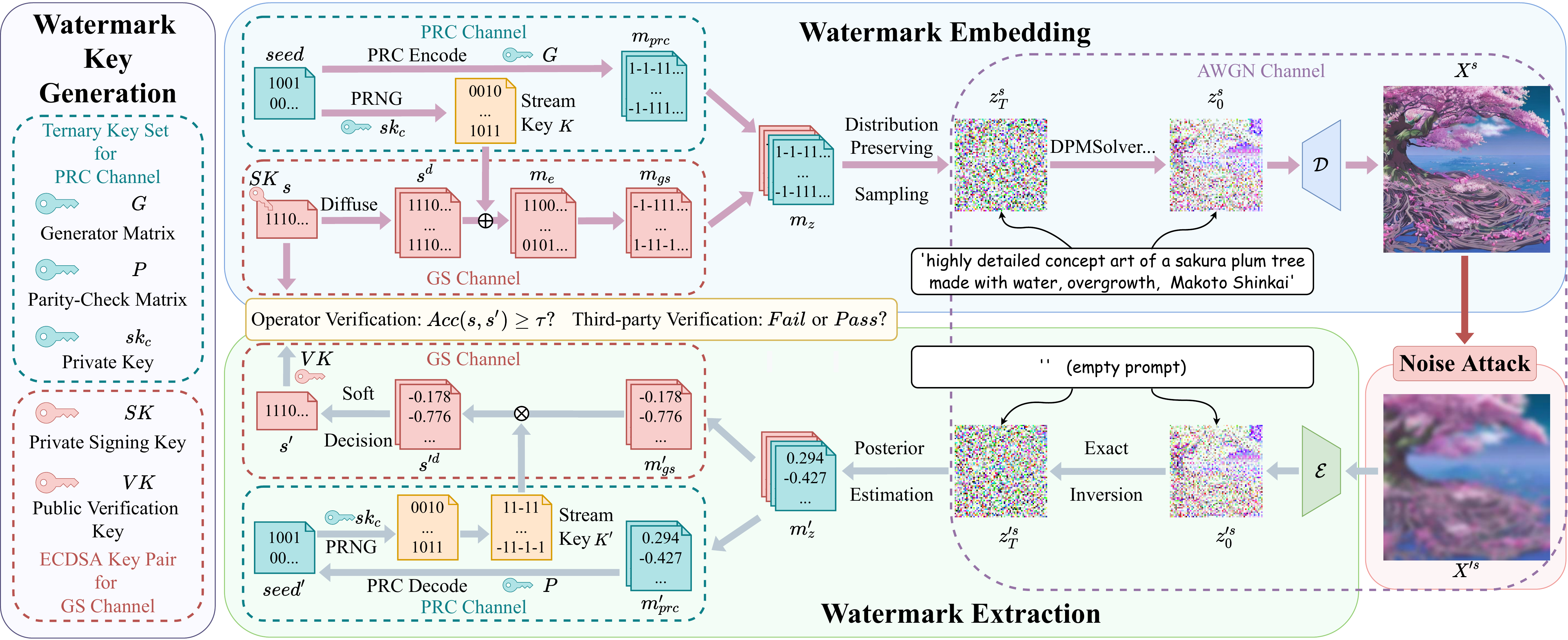}
   \caption{The framework of  Gaussian Shading++. The latent space is divided into the PRC Channel and GS Channel. During the watermark key generation, a ternary key set is generated for the PRC Channel. In the Third-party Verification scenario, a public-key signature ECDSA~\cite{ECDSA} key pair is introduced. During the watermark embedding, the PRC Channel serves as the header, encoding the random $seed$ to drive a PRNG, resulting in $m_{prc}$
 . The GS Channel embeds a $k$-bit watermark sequence $s$, which undergoes diffusion, encryption, and transformation to produce $m_{gs}$. The combined sequence of $m_{prc}$ and $m_{gs}$
  is used to drive distribution-preserving sampling, followed by denoising to generate watermarked images $X^s$. For watermark extraction, the process begins with Exact Inversion~\cite{Hong_2024_CVPR} to recover $z'_{T}$. The distortion throughout the entire generation and inversion process is modeled as an AWGN channel, enabling posterior estimation of the symbols of $z'_{T}$. Subsequently, the PRC Channel is first decoded to retrieve the random $seed$, which generates $K'$. $K'$ is employed to decrypt the GS Channel, and the final watermark is obtained through soft-decision decoding. In the Third-party Verification scenario, $s$ includes the user information and its signature, which must be verified after extracting $s'$.}
   \label{fig:framework}
\end{figure*} 

\subsection{Watermark Key Generation}
The watermarking scheme utilizes a composite key consisting of two components. The first component is an LDPC key used to construct the PRC Channel. The second component is a private key \( sk_c \), which is involved in the generation of the stream key for the GS Channel.

Suppose the latent representations have dimensions ${ch \times h \times w}$ and each dimension can represent $v$ bits. We designate the first half of the channels for PRC embedding, and refer to these channels as the PRC Channel. For the second half of the channels, we refer to them as the GS Channel.

We follow the framework of the pseudorandom error-correcting codes based on LDPC codes (Sec.~\ref{sec:ldpc-prc}). 
Specifically, the LDPC key is $(P,G)\leftarrow \texttt{KeyGen} \text{($\frac{ch \times h \times w}{2}, n_{seed},t,r$)}$. The private key \( sk_c \) is drawn uniformly at random from the binary space \( \{0,1\}^{n_{sk}} \).

For the Operator Verification scenario, the watermark key components \((P, G, sk_c)\) are treated as private.  
In contrast, for the Third-party Verification scenario, \((P, G, sk_c)\) are considered public parameters, while the actual watermark generation and extraction rely on an ECDSA~\cite{ECDSA} key pair \((SK, VK)\), where \(SK\) is the private signing key and \(VK\) is the corresponding public verification key.

\subsection{Watermark Embedding}

In both scenarios, watermark embedding should be strictly performed by the operator to prevent high-precision forgery attacks from malicious third parties. The specific process is as follows:
 
\subsubsection{PRC Channel} We first sample $seed \in \mathbb{F}_2^{n_{seed}}$ of PRNG uniformly at random, which is used to initialize the PRNG within the GS Channel for stream cipher generation. Then, we encode $seed$ into a codeword using the LDPC codes-based PRC: sample noise $e \leftarrow \operatorname{Ber}(n, \eta)$, $m_{c} = G\cdot seed \oplus e$.
Finally, a mapping is applied to convert $0$ to $1$ and $1$ to $-1$ on the  $m_c$, resulting in $m_{prc} = (-1)^{m_c}$. 

\subsubsection{GS Channel}

Given that the entire latent space is equally partitioned into two components, the watermark capacity becomes $\frac{{v\times ch\times h\times w}}{2}$ bits. To enhance the robustness of the watermark, we introduce a spatial replication factor $\frac{1}{f_{hw}}$ and a channel replication factor $\frac{1}{f_{ch}}$. We represent the watermark using $\frac{1}{f_{hw}}$ of the height and width, and $\frac{1}{f_{ch}}$ of the channel, and replicate the watermark $f_{ch} \cdot f^2_{hw}$ times. Thus, the watermark $s$ with dimensions $v\times \frac{ch}{2f_{ch}}\times \frac{h}{f_{hw}}\times \frac{w}{f_{hw}}$  is expanded into a diffused watermark $s^d$ with dimensions $\frac{{v\times ch\times h\times w}}{2}$. The actual watermark capacity is $q=\frac{{v\times ch\times h\times w}}{2f_{ch} \cdot f^2_{hw}}$ bits. In the third-party verifiable scenario, to defend against forgery attacks~\cite{müller2024blackbox}, we further introduce the public-key signature ECDSA~\cite{ECDSA}, where the watermark $s$ consists of the user information and its signature generated using the private signing key $SK$.

If we know the distribution of the diffused watermark $s^d$, we can directly utilize distribution-preserving sampling to obtain the corresponding latent representations $z^s_T$. However, in practical scenarios, its distribution is always unknown. Hence, we introduce a stream key $K$ to transform $s^d$ into a distribution-known randomized watermark $m_e$ through encryption. Specifically, we derive $K$ by concatenating the seed embedded in the PRC Channel with the private key $sk_c$, i.e., $K = \text{PRNG}(\text{H}(seed || sk_c))$, where $\text{H}$ denotes a cryptographic hash function and PRNG is a pseudorandom number generator used to produce $K$. Considering the use of PRNG, $m_e$ follows a uniform distribution, i.e., $m_e$ is a random binary bit stream. To be compatible with the PRC Channel, a mapping from $0$ to $1$ and from $1$ to $-1$ is required. That is, $m_{gs}=(-1)^{m_e}$.

\subsubsection{Distribution-preserving sampling driven by randomized watermark.}
At this point, $m_{prc}$ and $m_{gs}$ are combined into the watermark $m_z$ according to the channel where $m_{prc}$ is placed in the first two channels of the latent space, and $m_{gs}$ is placed in the last two channels.

Since both  $m_{prc}$ and $m_{gs}$ are pseudorandom, $m_z$ is also pseudorandom. We will prove this in Sec.~\ref{sec:prove}. Once the distribution of $m_z$ is known, a distribution-preserving sampling can be performed.

When each dimension represents $v$-bit randomized watermark $m_z$, these $v$ bits can be regarded as an integer $y \in [0, 2^v-1]$, where we treat $-1$ as binary $1$ and $1$ as binary $0$. Since $m_z$ is pseudorandom, $y$ follows a discrete uniform distribution, i.e., $p(y) = \frac{1}{2^v}$ for $y = 0, 1, 2, \dots, 2^v-1$. Let $f(x)$ denote the probability density function of the Gaussian distribution $\mathcal{N}(0, I)$, and $\text{ppf}(\cdot)$ denotes the quantile function. We divide $f(x)$ into $2^v$ equal cumulative probability portions. When $y=i$, the watermarked latent representation $z^s_T$ falls into the $i$-th interval, which means $z^s_T$ should follow the conditional distribution:
\begin{equation}
      p(z^s_T|y=i)=\left\{\begin{array}{cl}		2^v\cdot f(z^s_T) & \text{ppf}(\frac{i}{2^v}) < z^s_T \leq \text{ppf}(\frac{i+1}{2^v})  \\ 	0 &\text{otherwise}\end{array} \right. .\label{eq:conditional_dis}
\end{equation}
The probability distribution of $z^s_T$ is given by:
\begin{equation}
    p(z^s_T) = \sum^{2^v-1}_{i=0} p(z^s_T|y=i)p(y=i)=f(z^s_T) .\label{eq:distribution}
\end{equation}
Eq.~\ref{eq:distribution} indicates that $z^s_T$  follows the same distribution as the randomly sampled latent representation $z_T \sim \mathcal{N}(0, I)$. Next, we elaborate on how this sampling is implemented.

Let the cumulative distribution function of $\mathcal{N}(0, I)$ be denoted as $\text{cdf}(\cdot)$. We can obtain the cumulative distribution function of  Eq.~\ref{eq:conditional_dis} as follows,
\begin{equation}
\begin{aligned}
     &F(z^s_T|y=i)\\
     &\quad =\left \{\begin{array}{cl}	0&z^s_T<\text{ppf}(\frac{i}{2^v})\\	2^v\cdot \text{cdf}(z^s_T) - i & \text{ppf}(\frac{i}{2^v}) \leq z^s_T \leq \text{ppf}(\frac{i+1}{2^v})  \\ 	1 &z^s_T>\text{ppf}(\frac{i+1}{2^v}) \end{array} \right..\label{eq:cdf_condition}
\end{aligned}
\end{equation}

Given $y=i$, we aim to perform random sampling of $z^s_T$ within the interval $[\text{ppf}(\frac{i}{2^v}),\text{ppf}(\frac{i+1}{2^v})]$. 
The commonly used method is rejection sampling~\cite{hopper2002provably,chen2021distribution, add}, which can be time-consuming as it requires repeated sampling until $z^s_T$ falls into the correct interval. Instead, we can utilize the cumulative probability density. When randomly sampling $F(z^s_T|y=i)$, the corresponding $z^s_T$ is naturally obtained through random sampling. Since $F(z^s_T|y=i)$ takes values in $[0,1]$, sampling from it is equivalent to sampling from a standard uniform distribution, denoted as $u=F(z^s_T|y=i) \sim \mathcal{U}(0,1)$. Shift the terms of Eq.~\ref{eq:cdf_condition}, and take into account that cdf and ppf are inverse functions, we have
\begin{equation}
    z^s_T=\text{ppf}(\frac{u+i}{2^v}). \label{eq:z}
\end{equation}

Eq.~\ref{eq:z} represents the process of sampling the watermarked latent representation $z^s_T$ driven by the randomized watermark $m_z$.

\subsubsection{Image Generation}

After the sampling process, the watermark is embedded in the latent representation  $z^s_T$, and the subsequent generation process is no different from the regular generation process of SD. Here, we utilize the commonly adopted DPMSolver~\cite{lu2022dpm} for iterative denoising of $z^s_T$.
After obtaining denoised $z^s_0$, the watermarked image $X^s$ is generated using the decoder $\mathcal{D}$: $X^s=\mathcal{D}(z^s_0)$.

\subsection{Watermark Extraction}

In the Operator Verification scenario, watermark extraction is performed by the operator, and the verification result is made publicly available. In the Third-party Verification scenario, the operator is required to disclose the model inversion capability, allowing any third party to perform watermark extraction. The specific process is as follows:

\subsubsection{Inversion and Posterior Estimation }

Using the encoder $\mathcal{E}$, we first restore $X'^s$ to the latent space $z'^s_0 = \mathcal{E}(X'^s)$. Imprecise latent representation recovery can significantly reduce the effectiveness of watermark extraction. To address this, we employ the more precise Exact Inversion method~\cite{Hong_2024_CVPR} to estimate the additive noise.
After a sufficient number of inversion steps, $z'^s_T$ can be considered approximately equal to $z^s_T$ within an acceptable error margin.

After obtaining the inversion result $z'^s_T$, we follow the approach of Gunn et al.~\cite{christ2024pseudorandom} and model the entire image generation and inversion process as a noisy channel. Specifically, the concatenated codeword $m_z = m_{prc} || m_{gs} \in \{-1, 1\}^{ch \times h \times w}$ is viewed as passing through an AWGN Channel. The noise strength is characterized by $\sigma = \sqrt{3/2}$, and the posterior estimates $m'_z = m'_{prc} || m'_{gs}$ are obtained via the error function (erf) corresponding to the AWGN Channel, that is:
\begin{equation}
m'_z \triangleq \mathbb{E}\left[m \mid z'^s_T\right]=\operatorname{erf}\left(\frac{z'^s_T}{\sqrt{2 \sigma^{2}\left(1+\sigma^{2}\right)}}\right)
\end{equation}

After performing posterior estimation to obtain $m'_z$, we evenly split $m'_z$ along the channel dimension into two components: the PRC Channel watermark header $m'_{prc}$ and the GS Channel watermark information $m'_{gs}$, which are then processed separately.
 
\subsubsection{PRC Channel}
Assuming the LDPC key is given by $(P, G)$, we employ the belief propagation with ordered statistics decoding (BP-OSD)~\cite{bp,osd} to recover the estimated seed $n'_{seed}$ : $n'_{seed}=\text{BP-OSD}(P, m'_{prc})$. This recovered seed is then used to initialize the PRNG within the GS Channel for stream cipher regeneration during watermark extraction. 

\subsubsection{GS Channel}

For the GS Channel watermark information \( m'_{gs} \), decryption is performed via element-wise multiplication with the stream key \( K' = \text{PRNG}(\text{H}(seed'||sk_c)) \), since the following equation holds:
\begin{equation}
    s'^d\triangleq \mathbb{E}\left[(-1)^{s^d} \mid z'^s_T\right]=(-1)^{K'}\cdot m'_{gs},
\end{equation}
where \( s'^d \in [-1,1] \) is the posterior expectation of the repeated message \( (-1)^{s^d} \). To obtain the maximum a posteriori (MAP) decoding of the original message \( s \), the problem essentially reduces to decoding a repetition code (REP code~\cite{ryan2009channel}). 

For each bit \( s_j \) of the watermark information \( s \), the log-likelihood ratio (LLR) of its $i$-th repetition $s^d_{i,j}$ can be computed as follows:
\begin{equation}
    L L R_{i,j}=\log \frac{\operatorname{Pr}\left((-1)^{s^d_{i,j}}=1 \mid z'^s_T\right)}{\operatorname{Pr}\left((-1)^{s^d_{i,j}} = -1 \mid z'^s_T\right)}.
\end{equation}

Since $s'^d_{i,j}\triangleq \mathbb{E}\left[(-1)^{s^d_{i,j}} \mid z'^s_T\right]=\operatorname{Pr}\left((-1)^{s^d_{i,j}}=1 \mid z'^s_T\right)-\operatorname{Pr}\left((-1)^{s^d_{i,j}}=-1 \mid z'^s_T\right)$, it is easy to derive that:
\begin{equation}
    s'^d_{i,j} = \frac{e^{LLR_{i,j}}-1}{e^{LLR_{i,j}}+1} = \tanh(\frac{LLR_{i,j}}{2}).
\end{equation}

For each bit \( s_j \) of the watermark information \( s \),  
due to the independence of repeated observations, the total log-likelihood ratio \( LLR_{\text{total},j} \) for a repetition count of \(num= \frac{ch \times h \times w}{2f_{ch} \cdot f^2_{hw}} \) can be calculated by:
\begin{equation}
\begin{aligned}
    LLR_{\text{total}, j} 
    &\triangleq \log \frac{\operatorname{Pr}\left((-1)^{s_j} = 1 \mid z'^s_T\right)}{\operatorname{Pr}\left((-1)^{s_j} = -1 \mid z'^s_T\right)} \\
    &= \sum_{i=1}^{num} LLR_{i,j} 
    = \sum_{i=1}^{num} 2 \cdot \text{arctanh}(s'^d_{i,j}),
\end{aligned}
\end{equation}
Therefore, the final estimated \( s'_j \) of \( s_j \) can be calculated by:
\begin{equation}
    s'_j = \text{sign}(LLR_{\text{total},j}). \label{eq:full_llr}
\end{equation}

In practice, we use the first-order approximation of the arctanh function for computational efficiency, i.e., \( LLR_{i,j} \simeq 2s'^d_{i,j} \). The final estimated $s'_j$ is then given by:
\begin{equation}
    s'_j = \text{sign}
    (\sum_{i=1}^{num}2s'^d_{i,j})=\text{sign}
    (\sum_{i=1}^{num}s'^d_{i,j}).\label{eq:est_llr}
\end{equation}
The estimation method (Eq.~\ref{eq:est_llr}) shows negligible performance difference compared to the optimal scheme (Eq.~\ref{eq:full_llr}) that accumulates full LLRs.

In the Third-party Verification scenario, after the watermark is extracted, the user information and signature need to be verified using the public verification key $VK$.

\subsection{Proof of Lossless Performance} \label{sec:prove}

To demonstrate that our proposed hybrid watermarking scheme is performance-lossless, a sufficient condition is to prove that the embedded ciphertext $m_{prc}||m_{gs}$ satisfies the IND\$-CPA security property—i.e., it is computationally indistinguishable from a random bitstring under chosen-plaintext attacks. This suffices because the sampling process we adopt is distribution-preserving: as long as the ciphertext driving the sampling is IND\$-CPA secure, the resulting watermarked image remains computationally indistinguishable from a randomly sampled image under chosen-plaintext attacks~\cite{thietke2025towards}.

In the following, we present the formal definition of IND\$-CPA security, and subsequently provide a proof that the ciphertext $m_{prc}||m_{gs}$ used in our scheme satisfies this security notion.

\newtheorem{definition}{Definition}

\begin{definition}\label{IND-CPA} \textbf{(Chosen Plaintext Attack)}

Consider a symmetric encryption scheme with a key tuple \(K_{\mathsf{CS}} = (P, G, sk_c)\), which produces ciphertexts of the form:  
\[
(m_{\text{prc}} \,\|\, m_{\text{gs}})_{s^d} \triangleq (G \cdot seed \oplus e) \,\|\, \left( \mathsf{PRNG}(\mathsf{H}(seed \,\|\, sk_c)) \oplus s^d \right),
\]  
where \(s^d\) denotes the plaintext to be encrypted, and $\mathsf{H}:\{0,1\}^*\rightarrow \mathbb{F}_2^k$ is a hash function. The LDPC key pair \((P, G)\) is generated via the randomized algorithm \(\textbf{KeyGen}(l(k), k, t, l'(k))\) and $sk_c$ is randomly sampled with a length of $k$ bits, all parameters depend on the security parameter \(k\), except for the constant \(t\).

We define a chosen-plaintext attack (CPA) game between an adversary \(\mathcal{A}\) and a challenger as follows:

\begin{itemize}
    
    \item \textbf{Key generation stage.} $K_{\mathsf{CS}}=(P,G,sk_c)\leftarrow (\textbf{KeyGen}(l(k), k, t, l'(k)),1^k)$.

    \item \textbf{Learning stage.} $\mathcal{A}$ sends plaintext $s^d_\mathcal{A}$ to the oracle and  returns $(m_{\text{prc}} \,\|\, m_{\text{gs}})_{s^d_\mathcal{A}}$. $\mathcal{A}$ can perform this stage multiple times.
    \item \textbf{Challenge stage.} $\mathcal{A}$ sends plaintext $s^d \in \mathcal{M}\setminus \{s^d_\mathcal{A}\}$ to the oracle, which will flip a coin $b\in \{0,1\}$.
    If $b=0$, $\mathcal{A}$ obtains $m_z=(m_{\text{prc}} \,\|\, m_{\text{gs}})_{s^d}$; If $b=1$, $\mathcal{A}$ obtains $u \leftarrow U_{|(m_{\text{prc}} \,\|\, m_{\text{gs}})_{s^d}|}$.
    \item \textbf{Guessing stage.} $\mathcal{A}$ output a bit $b^{\prime}$ as a ``guess'' about whether it obtains a plaintext or a random string.
\end{itemize}

Define the Chosen Plaintext Attack  (CPA) advantage of $\mathcal{A}$ against the scheme by:
\begin{equation}
        \mathbf{A d v}_{\mathsf{CS}}^{\mathrm{cpa}}(\mathcal{A}, k)\triangleq 
	\left|
                \underset{K_{\mathsf{CS}}}{\operatorname{Pr}}\left[\mathcal{A}\left(m_z\right)=1\right] -\underset{K_{\mathsf{CS}}}{\operatorname{Pr}}\left[\mathcal{A}\left(u\right)=1\right]   
        \right|.
\end{equation}

The cryptographic scheme is \textbf{indistinguishable from uniformly random bits under chosen plaintext attack (IND\$-CPA)} if $\mathbf{InSec}^{\mathrm{cpa}}_{\mathsf{CS}}(t,l,k) \triangleq \max_{\mathcal{A} \in \mathcal{A}_{(t,l)}}\{\mathbf{Adv}_{\mathsf{CS}}^{\mathrm{cpa}}(\mathcal{A},k)\}$ is negligible in $k$.
\end{definition}

\newtheorem{theorem}{Theorem}
\begin{theorem}\label{The_IND-CPA}
Let the sparsity parameter be set as \( t = \Theta(\log l(k)) \), and suppose that each encryption execution samples a fresh \( seed \) uniformly at random from \( \mathbb{F}_2^k \). Assume that the pseudorandom generator \(\mathsf{PRNG}\) satisfies standard pseudorandomness under the security parameter $k$, and that the hash function \(\mathsf{H}\) is modeled as a \textbf{random oracle}. Then the resulting ciphertext \((m_{\text{prc}} \,\|\, m_{\text{gs}})_{s^d}\) is computationally indistinguishable from a uniformly random bitstring under a chosen-plaintext attack; that is, the scheme satisfies IND\$-CPA security.

\end{theorem}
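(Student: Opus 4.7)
The plan is to prove indistinguishability via a standard hybrid (game-hopping) argument that peels off the two ciphertext components in turn, reducing to the pseudorandomness of the PRC, the pseudorandomness of the PRNG, and the random oracle modeling of $\mathsf{H}$. Concretely, I would define a sequence $H_0,H_1,H_2,H_3$ of games, where $H_0$ is the real CPA game (bit $b=0$), and $H_3$ is the ideal game in which the adversary receives a uniformly random bitstring (bit $b=1$). At each step I would bound the adversary's distinguishing advantage by the insecurity of one underlying primitive, and then sum the (negligible) advantages by the triangle inequality.

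In $H_0 \rightsquigarrow H_1$, I would replace every PRC codeword $G\cdot seed_i \oplus e_i$ returned during the learning and challenge stages by a uniformly random bitstring of the same length, keeping $seed_i$ around implicitly only as an index into the PRNG. Because the constraint $t=\Theta(\log l(k))$ and the fresh sampling of each $seed_i\in\mathbb{F}_2^k$ match the hypotheses of Section~\ref{sec:ldpc-prc}, the pseudorandomness of the LDPC-based PRC (under subexponential LPN, or standard LPN plus planted XOR) gives that any efficient adversary distinguishes $H_0$ from $H_1$ with only negligible advantage; a routine multi-sample hybrid handles the polynomial number of learning queries.

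In $H_1 \rightsquigarrow H_2$, I would replace each PRNG output $\mathsf{PRNG}(\mathsf{H}(seed_i\,\|\,sk_c))$ by $\mathsf{PRNG}(r_i)$ for independent uniformly random $r_i$. This step is where the random oracle for $\mathsf{H}$ enters. After $H_1$, the seeds $seed_i$ are information-theoretically hidden from $\mathcal{A}$ (they no longer appear anywhere in $\mathcal{A}$'s view, since the first component is now uniform) and $sk_c$ is an independent uniform $k$-bit secret. Hence, for $\mathcal{A}$ to force a collision with any of its $\mathsf{poly}(k)$ random oracle queries, it must hit one of the pairs $(seed_i,sk_c)$; a union bound yields a failure probability of at most $\mathsf{poly}(k)\cdot 2^{-k}$, which is negligible in $k$. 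Conditioned on no such query, the oracle outputs on $(seed_i\,\|\,sk_c)$ are fresh uniform strings and the replacement is perfect. Then in $H_2\rightsquigarrow H_3$ I would invoke the pseudorandomness of $\mathsf{PRNG}$ (again with a multi-sample hybrid) to replace each $\mathsf{PRNG}(r_i)$ by a uniform keystream, after which $m_{\text{gs}} = (\text{uniform}) \oplus s^d$ is exactly uniform and independent of $s^d$, matching the $b=1$ oracle output.

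The main obstacle I expect is the subtle bookkeeping around the random oracle in $H_1\rightsquigarrow H_2$. One has to argue carefully that the joint view of $\mathcal{A}$ in $H_1$ leaks no information about either $seed_i$ or $sk_c$, so that the event ``$\mathcal{A}$ queries $\mathsf{H}$ on $(seed_i\,\|\,sk_c)$'' really does reduce to a guess with probability $2^{-k}$; the key point is that the substitution in $H_1$ must be performed \emph{before} exposing the seeds anywhere, so that the only remaining dependence on $seed_i$ and $sk_c$ is through the oracle call that we are about to simulate. Once this is handled cleanly, the remaining reductions to PRC and PRNG pseudorandomness are standard, and combining the three negligible bounds via the triangle inequality yields $\mathbf{InSec}^{\mathrm{cpa}}_{\mathsf{CS}}(t,l,k) = \mathsf{negl}(k)$, proving the theorem.
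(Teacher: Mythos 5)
Your overall strategy (a three-step hybrid over the PRC codeword, the random oracle, and the PRNG) is the same as the paper's, but you apply the steps in a different order, and that reordering opens a genuine gap in your first transition. In your $H_0\rightsquigarrow H_1$ you replace $G\cdot seed_i\oplus e_i$ by a uniform string while the second component still equals $\mathsf{PRNG}(\mathsf{H}(seed_i\,\|\,sk_c))\oplus s^d$, and you claim this step follows from PRC pseudorandomness alone (plus a multi-sample hybrid). But the PRC guarantee of Sec.~\ref{sec:ldpc-prc} is stated for an adversary that sees only codewords of \emph{internally sampled, hidden} messages; a reduction to it must simulate the rest of the adversary's view, and here that view contains $\mathsf{H}(seed_i\,\|\,sk_c)$, which depends on the very message $seed_i$ that the PRC challenger keeps secret. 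The reduction therefore cannot compute the GS component faithfully, so the transition does not follow from PRC pseudorandomness as stated; salvaging it forces you to invoke the random-oracle argument \emph{inside} this reduction, i.e., the ROM step cannot be deferred to your second transition the way you propose.

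The paper avoids this by performing the random-oracle step first: it replaces $\mathsf{H}(seed\,\|\,sk_c)$ by an independent uniform $r_1$ using only the secrecy of the $k$-bit key $sk_c$ (each oracle query hits $seed\,\|\,sk_c$ with probability at most $2^{-k}$, an argument that remains valid even while the real codeword is still present), and only then invokes PRC pseudorandomness, at which point the GS component is independent of $seed$ and trivially simulatable. Relatedly, your justification for the ROM step---that after your $H_1$ the seeds ``no longer appear anywhere in $\mathcal{A}$'s view, since the first component is now uniform''---attributes the $2^{-k}$ bound to the wrong source: what makes the bad query improbable is the secrecy of $sk_c$, not the uniformity of the first component. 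Reordering your hybrids to match the paper (random oracle, then PRC, then PRNG) repairs the argument; your remaining steps and the final triangle-inequality conclusion are fine.
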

\begin{proof}
Define $H_0 \triangleq (m_{\text{prc}} \,\|\, m_{\text{gs}})_{s^d} = (G \cdot seed \oplus e) \,\|\, \left( \mathsf{PRNG}(\mathsf{H}(seed \,\|\, sk_c)) \oplus s^d \right)$.

Define $H_1$ as the variant of $H_0$ where $\mathsf{H}(seed \,\|\, sk_c)$ is replaced by a random element of \( \mathbb{F}_2^k \), i.e. $H_1 \triangleq (G \cdot seed \oplus e) \,\|\, \left( \mathsf{PRNG}(r_1) \oplus s^d \right)$.

Define $H_2$ as the variant of $H_1$ where $(G \cdot seed \oplus e)$ is replaced by a random element of $\{0,1\}^{l(k)}$, i.e. $H_2 \triangleq r_2||\left( \mathsf{PRNG}(r_1) \oplus s^d \right)$.

Define $H_3$ as the variant of $H_2$ where $\mathsf{PRNG}(r_1)$ is replaced by a random element of $\{0,1\}^{|s^d|}$, i.e. $H_3 \triangleq r_2||(r_3 \oplus s^d)$.

\begin{figure}[htbp]
	\centering
	\includegraphics[width=0.45\textwidth]{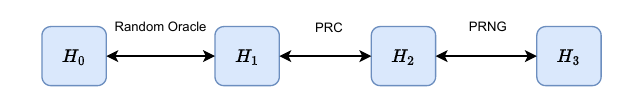}
	\caption{The hardness of distinguishing between $H_0$ and $H_3$.}
	\label{fig: fig_hardness}
\end{figure}

As shown in Fig.~\ref{fig: fig_hardness}, we claim that the advantage of distinguishing between $H_0$ and $H_1$, $H_1$ and $H_2$, $H_2$ and $H_3$, as well as $H_3$ and random bits, are all negligible in $k$.

(1) The advantage of distinguishing \( H_0 \) from \( H_1 \) is negligible if the hash function \( \mathsf{H} \) is modeled as a random oracle and the assumption that \( sk_c \) remains secret. Specifically, since the adversary does not know \( sk_c \), and \( seed \) is freshly sampled for each encryption, the value \( seed \,\|\, sk_c \) is unpredictable to the adversary. 

In the random oracle model, unless the adversary queries the oracle at exactly \( \mathsf{seed} \,\|\, sk_c \), the output \( \mathsf{H}(\mathsf{seed} \,\|\, sk_c) \) remains statistically independent of all previously seen values. Since \( sk_c \) is a \( k \)-bit secret key, each query hits the correct input with probability at most \( 2^{-k} \), making the overall success probability negligible in \( k \). Therefore, replacing \( \mathsf{H}(\mathsf{seed} \,\|\, sk_c) \) with a truly random string \( r_1 \leftarrow \mathbb{F}_2^k \) results in a distribution that is computationally indistinguishable from the original.

(2) Distinguishing \( H_1 \) from \( H_2 \) would contradict the pseudorandomness of the PRC construction. Given that the sparsity parameter is set as \( t = \Theta(\log l(k)) \), and each encryption execution samples a fresh \( seed \) uniformly at random from \( \mathbb{F}_2^k \), the term \( G \cdot seed \oplus e \) constitutes an LDPC codes-based PRC codeword. As shown in Sec.~\ref{sec:ldpc-prc}, the pseudorandomness of this construction is guaranteed either by the subexponential LPN assumption, or by the standard LPN assumption in combination with the planted XOR assumption.

Moreover, since each PRC codeword is generated using an independently sampled random $seed$, ciphertexts corresponding to different queries are statistically independent. This ensures that the construction satisfies chosen-plaintext attack (CPA) security under the assumed hardness of the underlying PRC.

(3) Distinguishing \( H_2 \) from \( H_3 \) would contradict the security of the $\mathsf{PRNG}$. By assumption, the $\mathsf{PRNG}$ satisfies standard pseudorandomness under the security parameter \( k \); that is, for a uniformly sampled seed \( r_1 \in \{0,1\}^k \), the output \( \mathsf{PRNG}(r_1) \) is computationally indistinguishable from a uniformly random bitstring of the same length. Therefore, if an adversary could distinguish \( H_2 \) from \( H_3 \), it would imply a distinguisher against the $\mathsf{PRNG}$. 

In addition, since each encryption instance uses a freshly sampled $seed$, the input to the $\mathsf{PRNG}$ is independent across queries. Together with the pseudorandomness of $\mathsf{PRNG}$, this ensures that ciphertexts are CPA-secure even under multiple chosen plaintexts.

(4) Distinguishing \( H_3 \) from a uniformly random bitstring is information-theoretically impossible, as the distribution of \( H_3 \) is identical to the uniform distribution over \( \{0,1\}^{l(k) + |s^d|} \).

By a sequence of hybrid arguments, we have shown that the ciphertext generated by the scheme is computationally indistinguishable from a uniformly random bitstring under a chosen-plaintext attack. Therefore, the scheme satisfies IND\$-CPA security under the random oracle model, assuming the pseudorandomness of the $\mathsf{PRNG}$, the pseudorandomness of the PRC, and the secrecy of the private key \( sk_c \).
\end{proof}

\newtheorem{remark}{Remark}

\begin{remark}
As demonstrated above, our hybrid watermarking scheme inherits the pseudorandomness guarantees established in prior theoretical work. This represents a conceptual advancement over PRCW~\cite{prc}, whose watermark ciphertext takes the form \( G \cdot (testbits \,\|\, m \,\|\, r) \oplus e \). Since $testbits$ and \( m \) are not randomly sampled during use, PRCW cannot theoretically reduce its performance-lossless property to the hardness assumptions underlying LDPC codes-based PRC. In practical scenarios where \( m \) remains fixed, the pseudorandomness of PRCW relies solely on \( r \), making it potentially vulnerable under large sample sizes. This limitation in pseudorandomness may be empirically observed, as discussed in Sec.~\ref{sec:Latent Distribution}.

This proof above is conducted in the context of \textbf{Operator Verification}, where our watermarking scheme is shown to be provably performance-lossless. When extended to the \textbf{Third-party Verification} setting, we no longer consider the undetectability of the watermark compared to the unwatermarked image, since public verifiability inherently conflicts with undetectability. This is because undetectability requires that no polynomial-time algorithm can distinguish between the two, whereas in the \textbf{Third-party Verification} setting, the watermark extraction algorithm is publicly available and can naturally be used for detection. 
\end{remark}

\begin{figure*}[htbp]
    \centering
    \subfloat[Original]{\includegraphics[width=.19\linewidth]{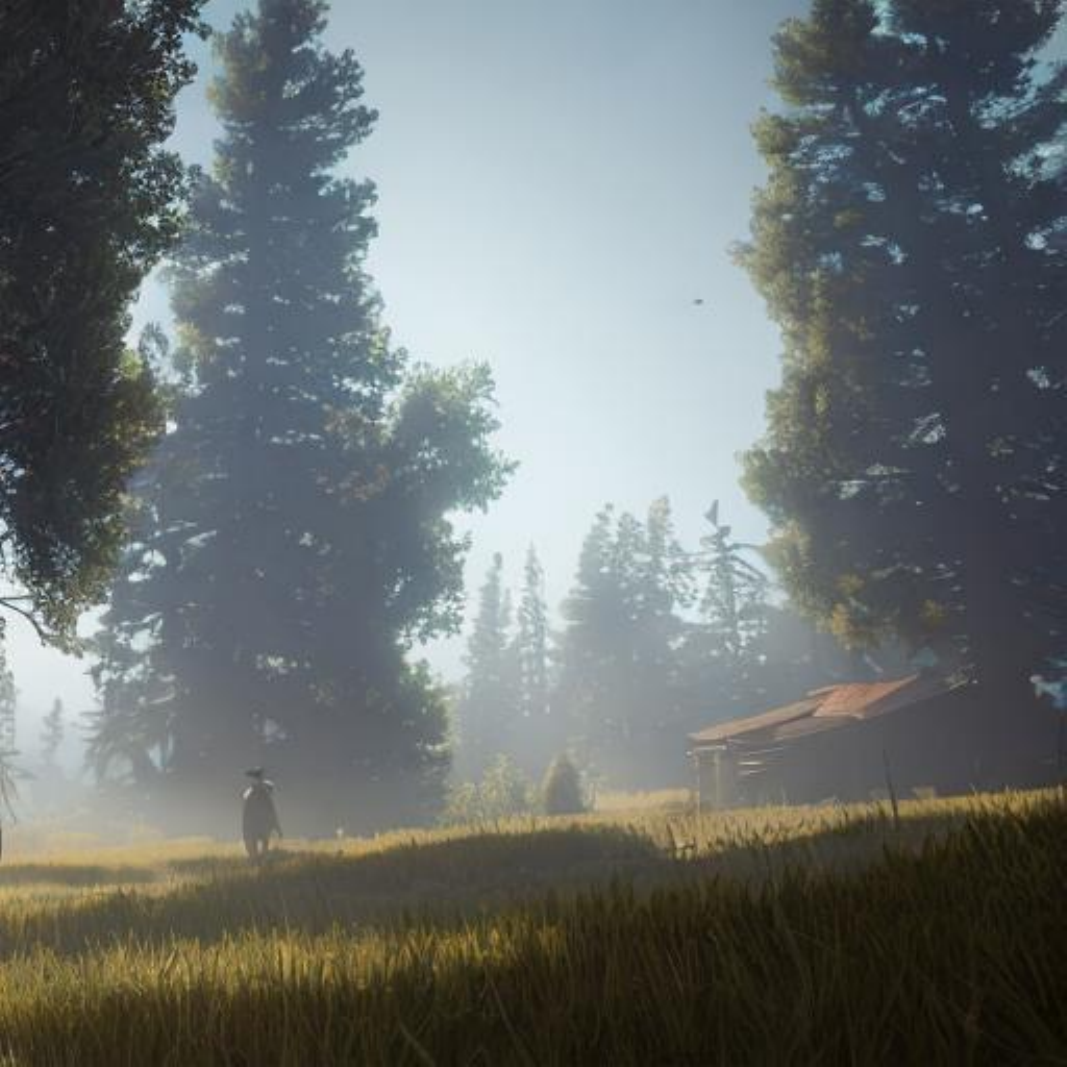}}\hspace{0.005\linewidth}
    \subfloat[DwtDct]{\includegraphics[width=.19\linewidth]{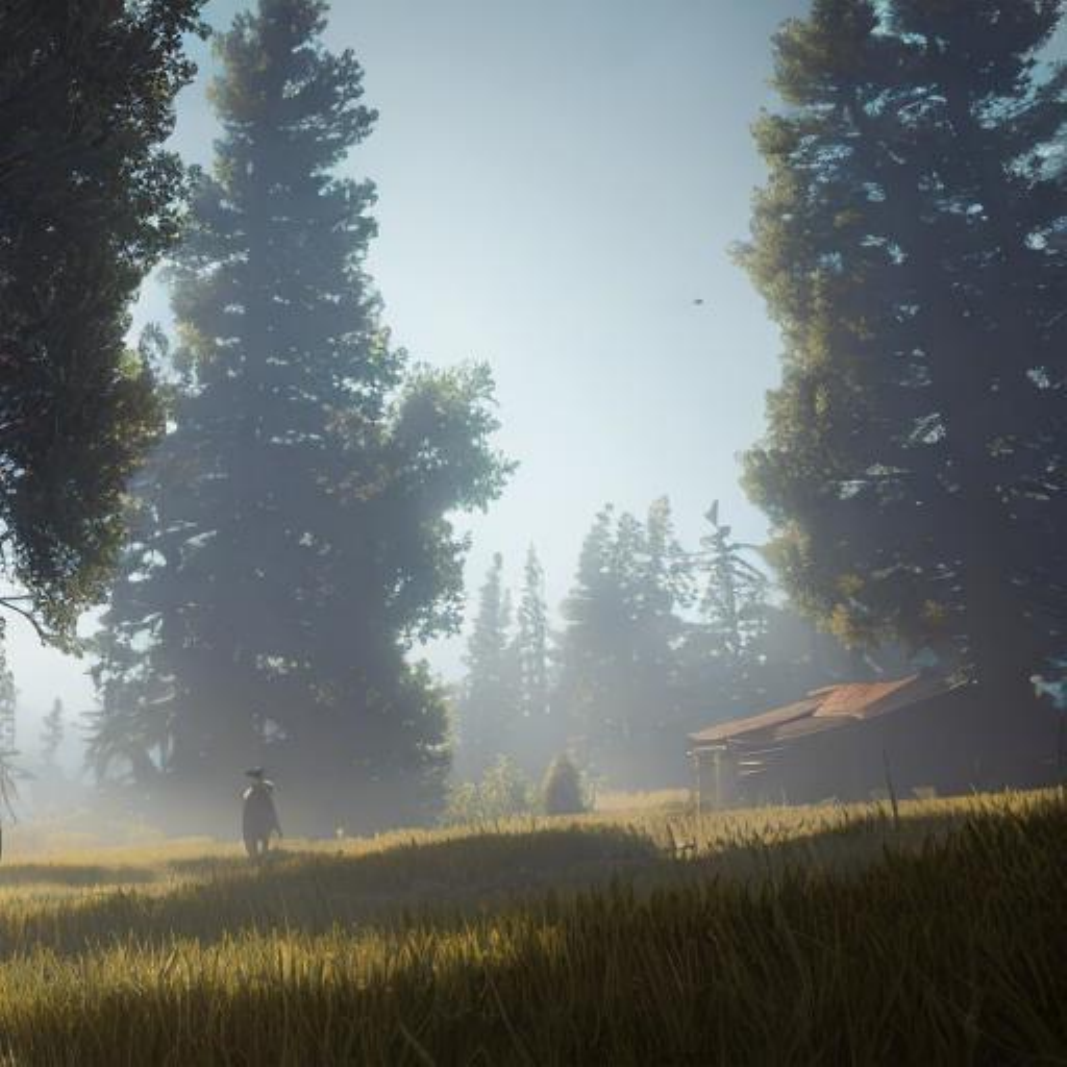}}\hspace{0.005\linewidth}
    \subfloat[DwtDctSvd]{\includegraphics[width=.19\linewidth]{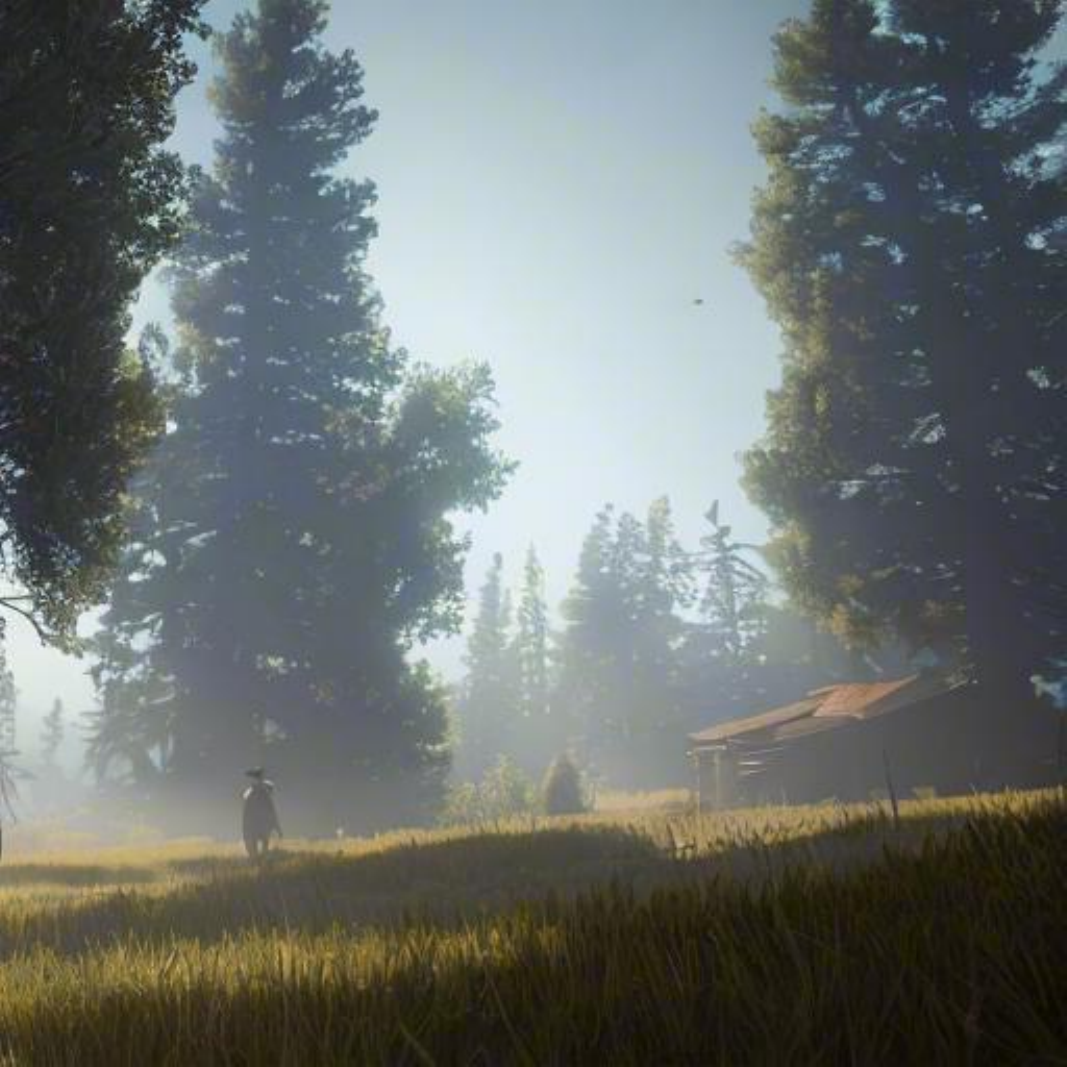}}\hspace{0.005\linewidth}
    \subfloat[RivaGAN]{\includegraphics[width=.19\linewidth]{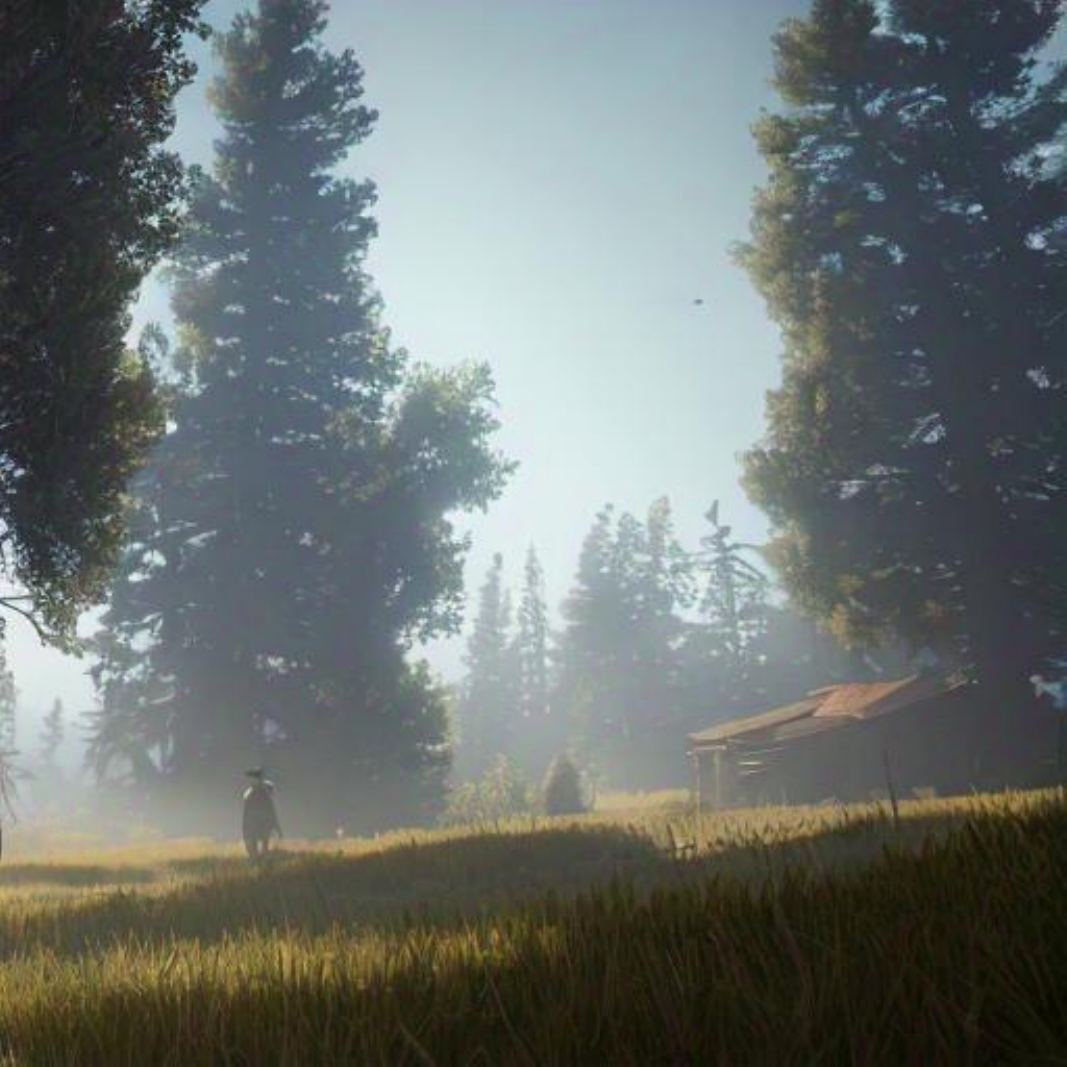}}\hspace{0.005\linewidth}
    \subfloat[Stable Signature]{\includegraphics[width=.19\linewidth]{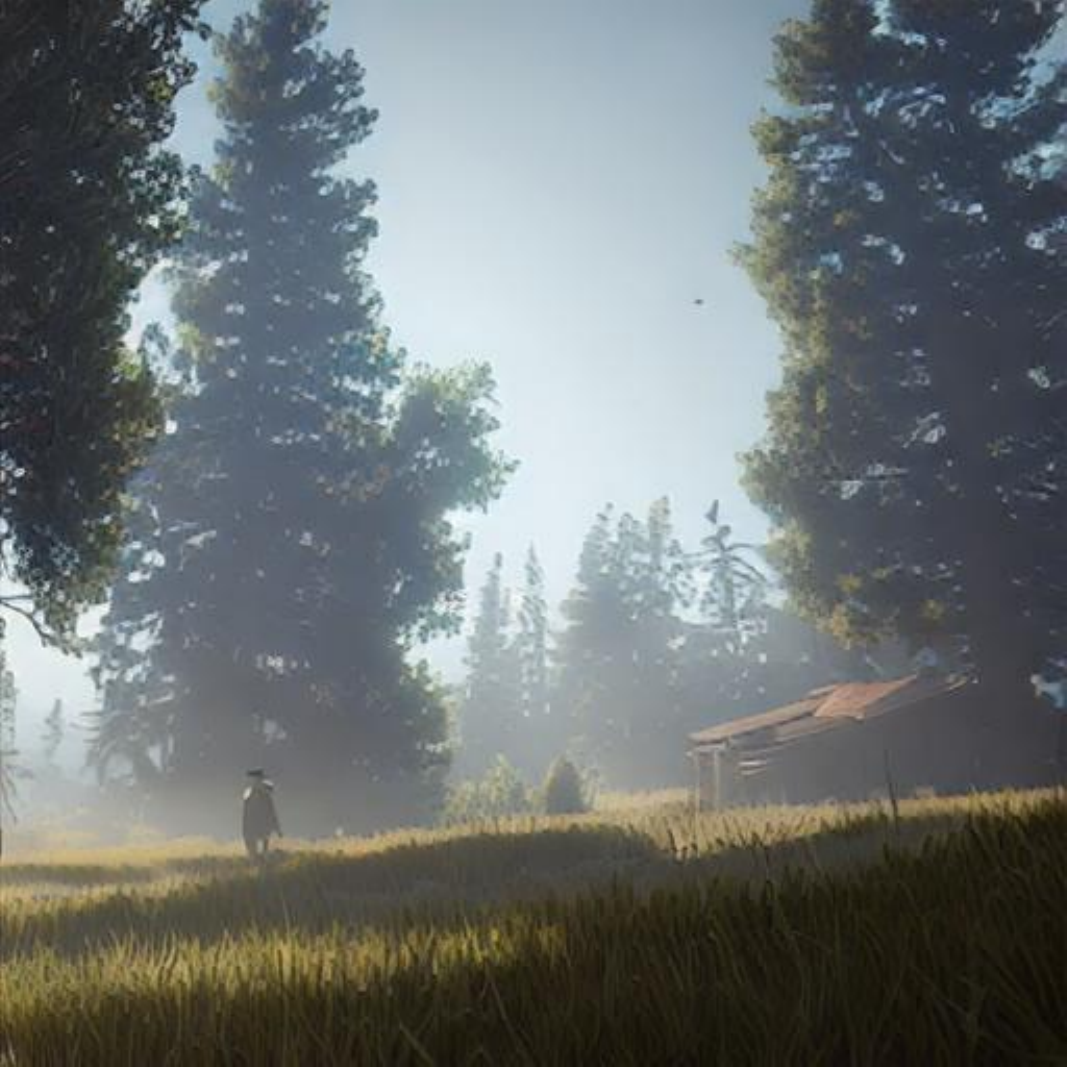}}

    \hspace{0.195\linewidth} 
    \subfloat[TRW]{\includegraphics[width=.19\linewidth]{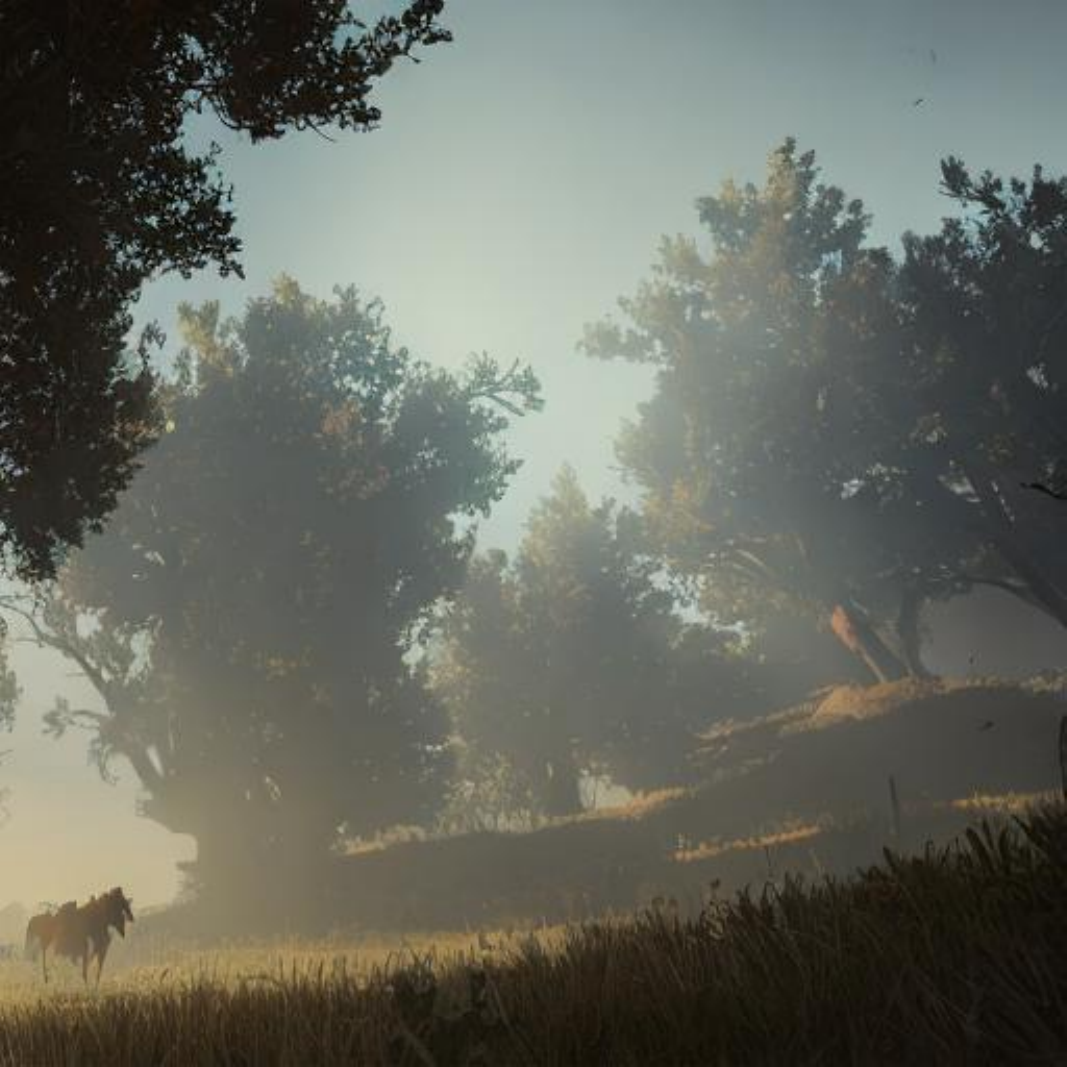}}\hspace{0.005\linewidth}
    \subfloat[Gaussian Shading]{\includegraphics[width=.19\linewidth]{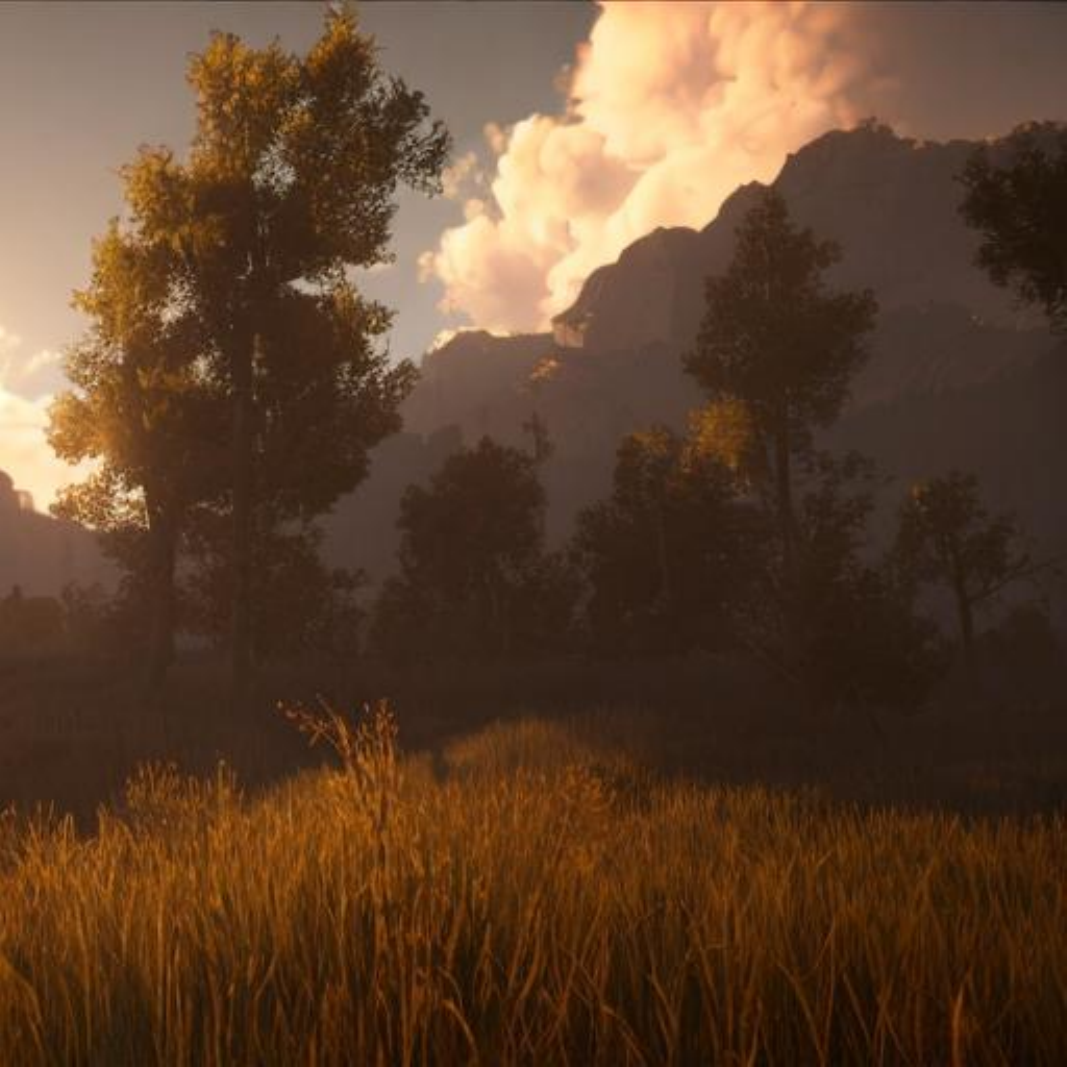}}\hspace{0.005\linewidth}
    \subfloat[PRCW]{\includegraphics[width=.19\linewidth]{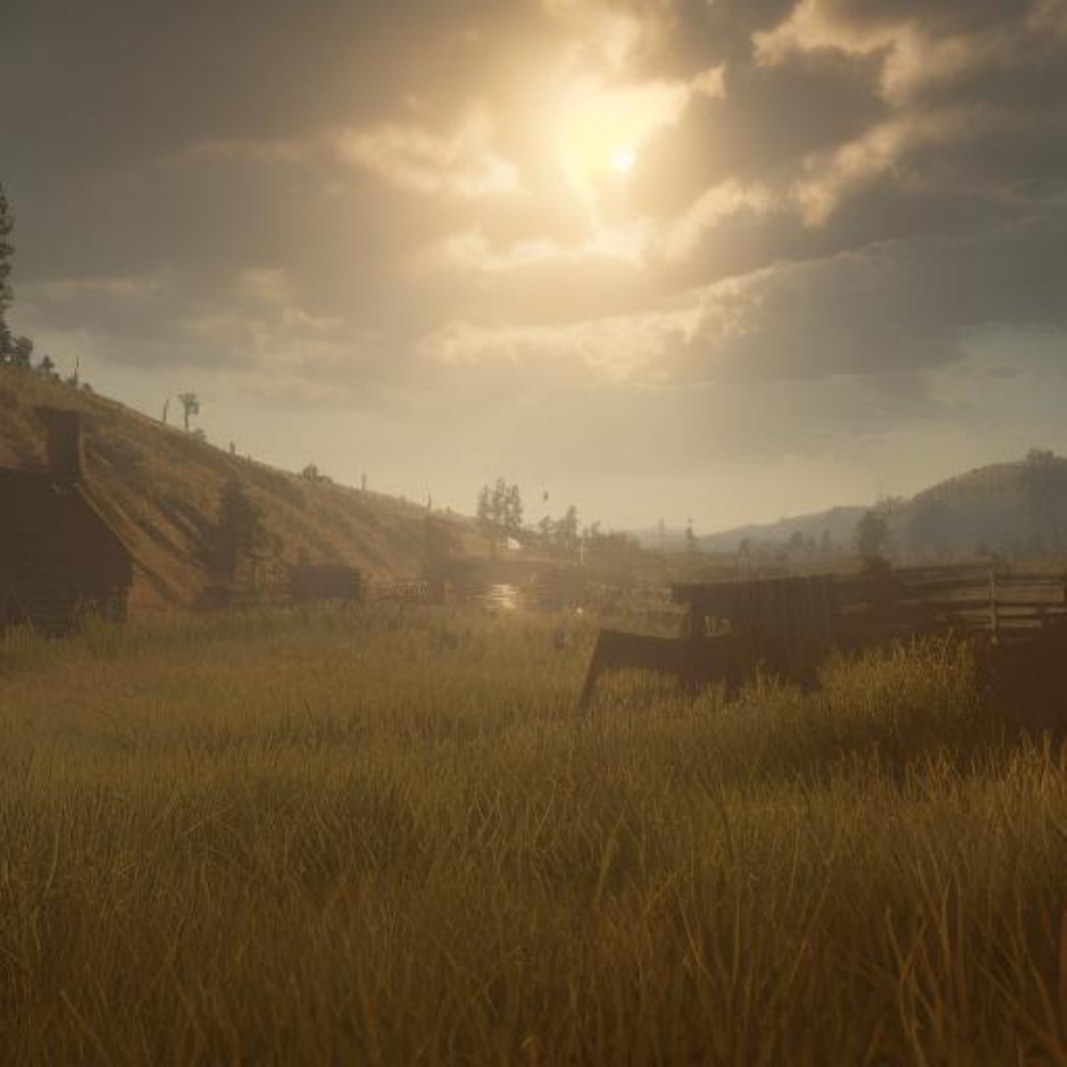}}\hspace{0.005\linewidth}
    \subfloat[Gaussian Shading++]{\includegraphics[width=.19\linewidth]{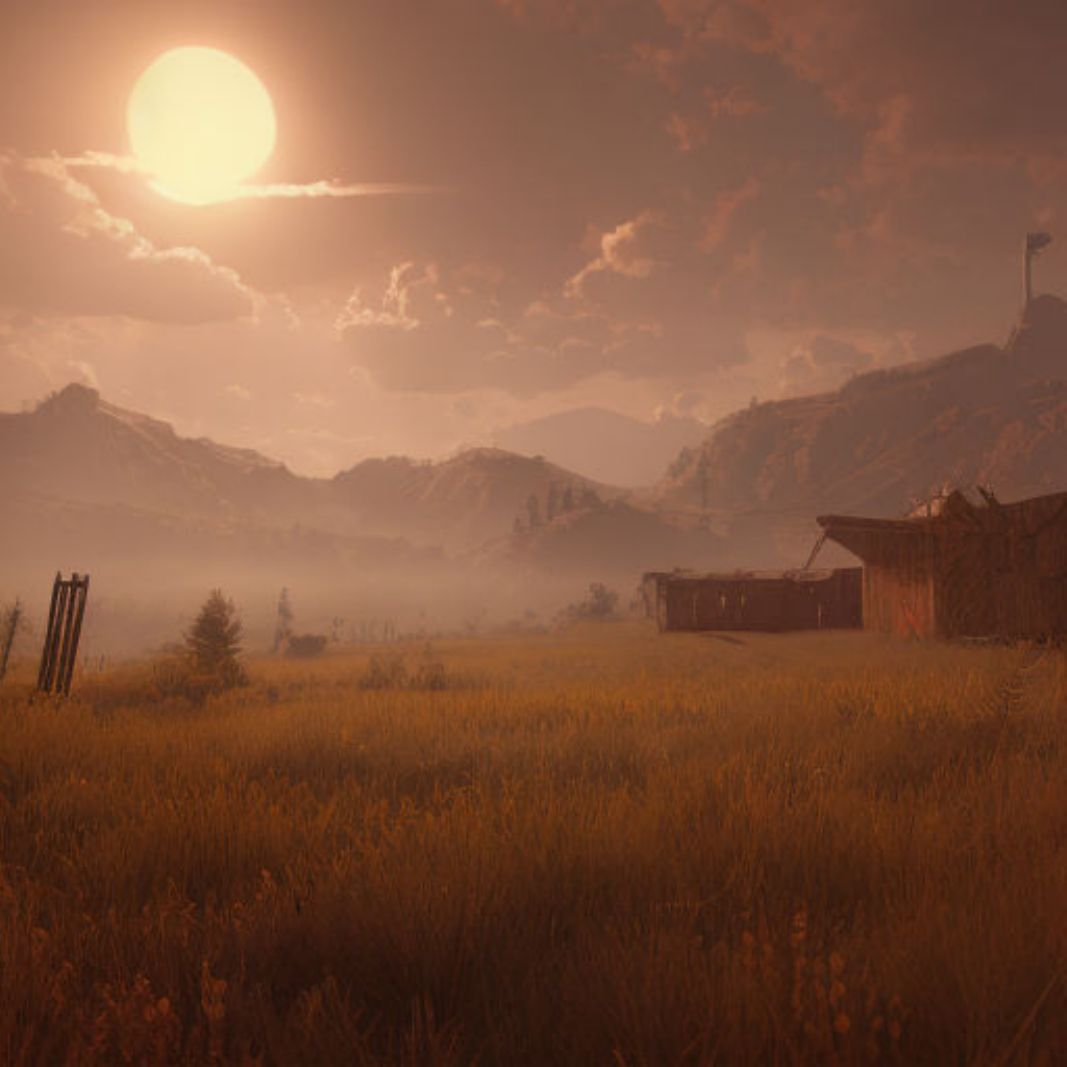}}

    \caption{Watermarked images generated using different watermarking methods with the same prompt: ``\textit{Red dead redemption 2, cinematic view, epic sky, detailed, concept art, low angle, high detail, warm lighting, volumetric, godrays, vivid, beautiful, trending on artstation, by jordan grimmer, huge scene, grass, art greg rutkowski.}". Among them, post-processing-based methods (b) (c) (d) and the fine-tuning-based method (e) only add image residuals compared to the original image (a).}
    \label{fig:wm}
\end{figure*}
\section{Experiments}
This section presents the experimental analysis. We begin by evaluating the performance of Gaussian Shading++ in both the Operator Verification and Third-party Verification scenarios. Furthermore, we conduct comprehensive comparisons with several state-of-the-art methods. Lastly, to validate the effectiveness of each component within Gaussian Shading++, we perform thorough ablation studies.

\subsection{Implementation Details}
\subsubsection{Diffusion Models} In this paper, we focus on text-to-image latent diffusion models, hence we select the Stable Diffusion (SD)~\cite{rombach2022high}  provided by huggingface. We evaluate Gaussian Shading++ as well as baseline methods, using SD V2.1. The size of the generated images is $512\times 512$, and the latent space dimension is $4 \times 64 \times 64$. During inference, we adopt prompts from the Stable-Diffusion-Prompt (SDP) dataset\footnote{\href{https://huggingface.co/datasets/Gustavosta/Stable-Diffusion-Prompts}{Stable-Diffusion-Prompts}}, using a guidance scale of $7.5$ (the default setting in SD). Image generation is performed with $50$ sampling steps using DPMSolver~\cite{lu2022dpm}.
Considering that users typically share generated images without preserving the original prompts, we perform $50$ steps of inversion using Exact Inversion~\cite{Hong_2024_CVPR} with an empty prompt and a guidance scale of $3$ (the default setting in Exact Inversion~\cite{Hong_2024_CVPR}).

\subsubsection{Watermark Methods} In the main experiments, The PRC Channel of Gaussian Shading++ encodes a $32$-bit 
$seed$ using the PRC. For the GS Channel, we set the parameters as $f_{ch}=2, f_{hw}=4, v=1$, resulting in an actual capacity of $256$ bits. In the Third-party Verification scenario, we employ the public-key signature scheme ECDSA~\cite{ECDSA},  the $256$-bit watermark is further split into $32$ bits of user information and $224$ bits of signature.

For comparison, we select three representative categories of watermark methods as baselines:
\begin{itemize}
    \item Post-processing-based: We adopt three officially used by SD, which embed specific patterns in either the frequency or spatial domain: DwtDct~\cite{cox2007digital}, DwtDctSvd~\cite{cox2007digital}, and RivaGAN~\cite{zhang2019robust}. The capacities of DwtDct~\cite{cox2007digital} and DwtDctSvd~\cite{cox2007digital} are set to $256$ bits. Since RivaGAN~\cite{zhang2019robust} supports a maximum capacity of $32$ bits, we retain this setting.

    \item Fine-tuning-based: We adopt Stable Signature~\cite{fernandez2023stable}, which injects watermark information by fine-tuning the VAE decoder of Stable Diffusion, ensuring the watermark is embedded in the generated images. Due to convergence issues caused by large watermark capacities, we instead use the official open-source model of Stable Signature with a capacity of $48$ bits\footnote{\href{https://github.com/facebookresearch/stable_signature}{The GitHub Repository for Stable Signature}}. During fine-tuning, we use $400$ images from the ImageNet2014~\cite{deng2009imagenet} validation set, with a batch size of $4$ and $100$ training steps.

    \item Latent-representation-based: We adopt TRW~\cite{wen2023tree} , Gaussian Shading~\cite{gs}, and PRCW~\cite{prc}. TRW~\cite{wen2023tree} embeds the watermark by modifying latent representations to align with specific patterns. As it is a single-bit watermark, we evaluate it only in the detection task. Since its Rand mode better aligns with the notion of performance-lossless watermark, we adopt this setting. Gaussian Shading~\cite{gs} uses a stream cipher to pseudorandomize the watermark. However, considering the challenges of key management in practical deployment, the stream cipher is fixed during our experiments. PRCW~\cite{prc} encodes the watermark using pseudorandom error-correcting codes. For a fair comparison, the watermark capacities of both Gaussian Shading~\cite{gs} and PRCW~\cite{prc} are set to $256$ bits.
\end{itemize}

Examples of images from all of the above watermarking methods are shown in Fig.~\ref{fig:wm}.

\subsubsection{Robustness Evaluation}  We evaluate the robustness of the methods from two perspectives: traditional noise distortions and removal attack~\cite{zhao2024invisible}. For traditional distortions, we select six representative types of noise: (a) JPEG Compression, $QF=70$ (JPEG). (b) Brightness, $factor=1$. (c) Gaussian Blur, $radius= 3$ (GauBlur). (d) Gaussian Noise, $\mu = 0$, $\sigma = 0.01$ (GauNoise). (e) Median Filtering, $kernel\_size=7$ (MedFilter). (f) 50\% Resize and restore (Resize).  
For removal attack, we adopt Variational AutoEncoder (VAE)~\cite{balle2018variational,cheng2020learned,esser2021taming,rombach2022high}, and Stable Diffusion (SD)~\cite{rombach2022high} as erasure networks.

\begin{figure*}[t]
    \centering
    \subfloat[Detection results.]{\label{Fig:tpr}\includegraphics[width=0.4\linewidth]{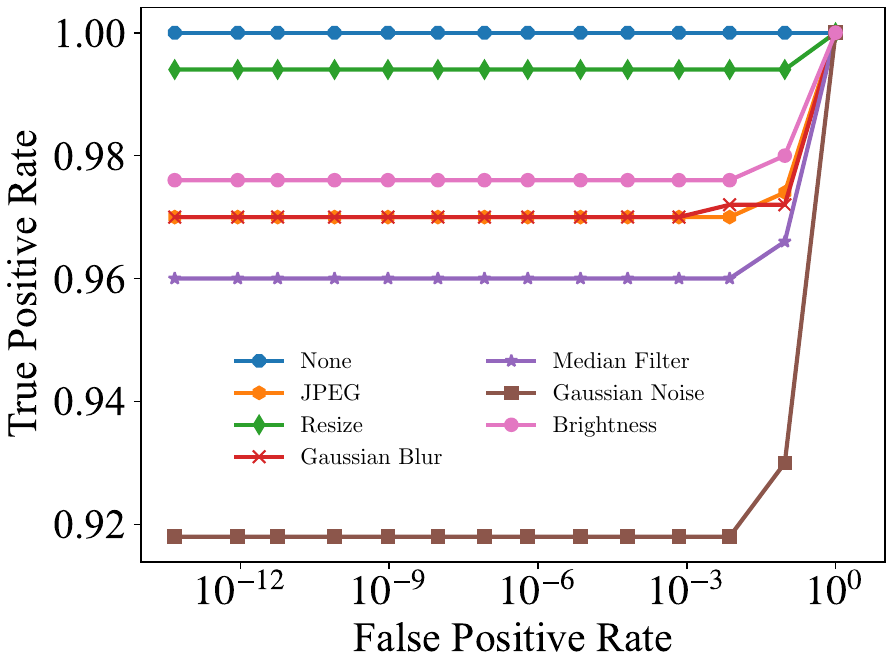}}\hspace{0.1\linewidth}
     \subfloat[Traceability results.]{\label{Fig:t_acc}\includegraphics[width=0.4\linewidth]{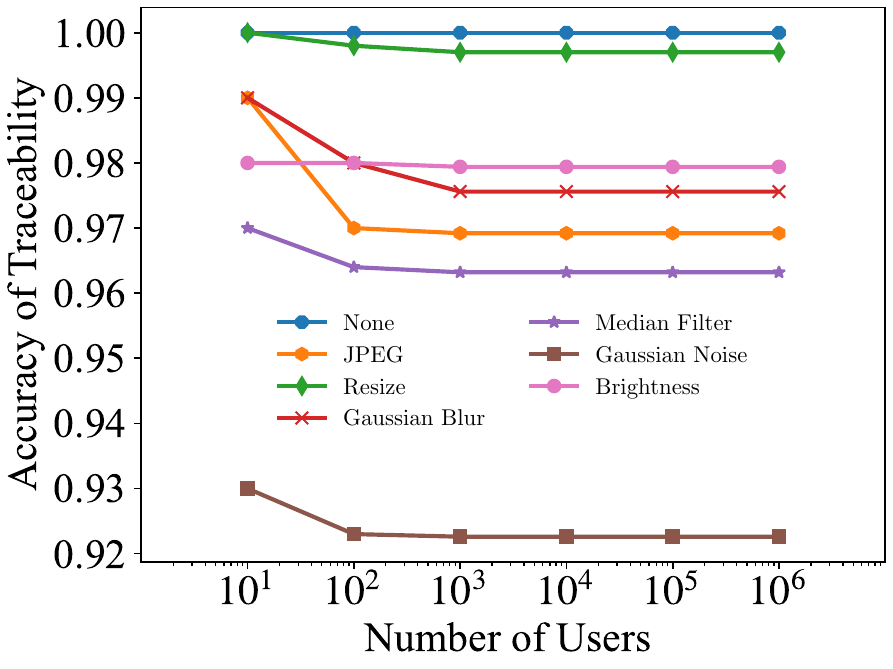}}
    \caption{Performance of Gaussian Shading++ in Operator Verification scenario. The results are presented separately for detection and traceability tasks.}
    \label{fig:performance}
\end{figure*}

\subsubsection{Evaluation Metrics} 
In the Operator Verification scenario, we focus on two primary tasks: detection and traceability. For detection, we compute the bit accuracy threshold corresponding to a fixed false positive rate (FPR) based on Eq.~\ref{smeq:detection}, and then report the true positive rate (TPR) on watermarked images. For traceability, we directly evaluate the bit accuracy of watermark extraction.
In the Third-party Verification scenario, we evaluate the accuracy of traceability, measuring the success rate of accurately tracing the watermark back to the target user.

Apart from the above metrics, we assess the impact of each method on model performance from two perspectives: visual quality and distribution of latent representations.
For visual quality, we employ Fréchet Inception Distance (FID)~\cite{heusel2017gans} and CLIP-Score~\cite{radford2021learning} to measure the realism and text-image alignment of watermarked images, respectively. Specifically, we compute FID and CLIP-Score over $10$ batches of watermarked images and conduct a $t$-test comparing the mean values against those of non-watermarked images. A smaller $t$-value indicates less performance degradation caused by watermarking.
For the distribution of latent representations, which mainly targets latent-representation-based methods, we generate $80,000$ latent representations and model them as samples from a standard Gaussian distribution. We then perform hypothesis testing, including the K-S test, Shapiro-Wilk test,  and other statistical methods, to compute the $p$-value. A higher $p$-value indicates a closer match to the standard Gaussian distribution.

All experiments are conducted using the PyTorch 2.1.0
framework, running on a single RTX A6000 GPU.

\subsection{Performance of Gaussian Shading++}

\subsubsection{Operator Verification}

To enable detection, we consider Gaussian Shading++ as a single-bit watermark, with a fixed watermark $s$. We approximate the FPR to be controlled at $10^0,10^{-1},\dots,10^{-13}$, calculate the corresponding threshold $\tau$ 
 , and test the TPR on $500$ watermarked images. As shown in Fig.~\ref{Fig:tpr}, when the FPR is controlled at $10^{-13}$, the TPR remains at least $0.97$  for five out of the seven cases. Although the TPR for Gaussian Noise is only $0.918$, it is still a promising result.

For traceability, Gaussian Shading++ serves as a multi-bit watermark. Assuming Alice provides services to $N$ users, Alice needs to allocate one watermark for each user. In our experiments, we assume that $N' = 1,000$ users generate images, with each user generating $5$ images, resulting in a dataset of $5,000$ watermarked images.

During testing, we calculate the threshold $\tau$  
to control the FPR at $10^{-6}$. Note that when computing traceability accuracy, we need to consider two types of errors: false positives, where watermarked images are not detected, and traceability errors, where watermarked images are detected but attributed to the wrong user. Therefore, we first  
determine whether the image contains a watermark. If it does, we calculate the number of matching bits $Acc$ with all $N$ users on the platform. The user with the highest $Acc$ is considered the one who generated the image. Finally, we verify whether the correct user has been traced. When $N > N'$, it can be assumed that some users have been assigned a watermark but have not generated any images.

As shown in~Fig.~\ref{Fig:t_acc}, 
when $N = 10^6$, Gaussian Shading++ achieves a traceability accuracy of over $96\%$ in  six cases. Although the traceability accuracy for Gaussian Noise is only $92.26\%$, if a user generates two images, the probability of successfully tracing him is still no less than $99\%$.

\subsubsection{Third-party Verification}

In this scenario, any third party can perform traceability on the target user using the watermark database publicly provided by the operator. Since the public-key signature is unforgeable, the probability that a randomly generated signature passes verification is negligible. Therefore, the traceability accuracy is effectively equivalent to the signature verification success rate. We generate $500$ images and evaluate the traceability accuracy of Gaussian Shading++ under various traditional noise distortions, both before and after incorporating ECDSA. As shown in Tab.~\ref{tab:ecdsa}, after introducing the public-key signature, Gaussian Shading++ experiences a significant performance drop when facing filtering-based distortions such as Gaussian Blur and Median Filter. However, it still maintains robust performance under other conditions, with traceability accuracy remaining above $70\%$.

\begin{table}[htbp]
	\centering
 	\caption{Traceability accuracy of Gaussian Shading++. ``Ours" denotes the naive version, while ``Ours + ECDSA" represents the version with the public-key signature.} \label{tab:ecdsa}
       \begin{tabular}{@{}ccc@{}}
    \toprule
    \multirow{2}{*}{\textbf{Noise}} &
		\multicolumn{2}{c}{\textbf{Methods}} \\
		\cmidrule{2-3}
		&Ours  & Ours + ECDSA\\
\midrule
None&1.000 &0.994   \\
JPEG & 0.972&   0.722\\
Brightness & 0.980&  0.896 \\
GauBlur & 0.976&   0.158\\
GauNoise   & 0.922&   0.714\\
MedFilter   & 0.964& 0.358\\
Resize  & 0.996&  0.870      \\
\bottomrule
\end{tabular}
\end{table}

After the operator publicly releases the model inversion capability, malicious users can reuse latent representations and generate forged images with illicit prompts on a proxy model to falsely frame target users, thereby launching a reprompt attack (``Attk")~\cite{müller2024blackbox}. Moreover, since the sign of the latent representations in Gaussian Shading++ encodes the watermark, attackers can enhance the attack (``Attk+") by resampling within specific intervals corresponding to the watermark bits.

In our experiments, we consider five potential proxy models that attackers might use: SD V1.4, SD V1.5, SD V2.0, SD V2.1~\cite{rombach2022high}, and SD-XL~\cite{podell2023sdxl}. The first $500$ prompts from the I2P~\cite{schramowski2023safe} dataset are used as illicit prompts. For the ``Attk" scenario, we use SD V2.1 to generate $500$ forged images. For the enhanced ``Attk+" scenario, we perform three resampling attempts for each illicit prompt, resulting in a total of $1,500$ forged images. We compared the attack success rate based on the use of ECDSA, and the results are presented in Tab.~\ref{tab:reprompt}. It is evident that introducing the public-key signature significantly reduces the risk of Gaussian Shading++ being forged, particularly when the proxy models are SDV1.4 and SDV1.5. Since SD-XL has a model architecture and parameters that are substantially different from SD V2.1, forgery is difficult to achieve. However, due to the similarity in parameters between SD V2.0 and SD V2.1, these two proxy models can easily perform forgery, which further emphasizes the importance of the operator safeguarding the model parameters.

Overall, although introducing ECDSA may compromise some robustness, it significantly reduces the risk of successful forgery when attackers cannot access precise model parameters. This trade-off is worthwhile when extending the functionality to the Third-party Verification scenario.

\begin{table}[htbp]
\centering
  \caption{Attack success rate of reprompt attacks (``Attk" / ``Attk+") on Gaussian Shading++. ``Ours" denotes the naive version, while ``Ours + ECDSA" represents the version with the public-key signature. }
  \label{tab:reprompt}
       \begin{tabular}{@{}ccc@{}}
    \toprule
    \multirow{2}{*}{\textbf{Proxy Model}} &
		\multicolumn{2}{c}{\textbf{Methods}} \\
		\cmidrule{2-3}
		&Ours  & Ours + ECDSA\\
\midrule
SD V1.4 & 0.894 / 0.937 & \textbf{0.483} / \textbf{0.523}     \\
SD V1.5 & 0.886 / 0.937 & \textbf{0.362} / \textbf{0.433}     \\
SD V2.0 & 0.941 / 0.985 & \textbf{0.832} / \textbf{0.889}   \\
SD V2.1 & 0.944 / 0.985 & \textbf{0.839} / \textbf{0.889}    \\
SD-XL   & \textbf{0.000} / \textbf{0.000}        & \textbf{0.000} / \textbf{0.000}             \\
\bottomrule
\end{tabular}
\end{table}

\begin{table*}[t]
  \centering
    \caption{Comparison of robustness under traditional noise distortions. We control the FPR at $10^{-6}$, and evaluate the TPR / bit accuracy for SD V2.1. \textbf{Bold} represents the best, \underline{underline} represents the second best. }
  \label{tab:com_noise}

  \begin{tabular}{@{}cccccccc@{}}
    \toprule
    \multirow{2}{*}{\textbf{Methods}} &
		\multicolumn{7}{c}{\textbf{Noise}} \\
		\cmidrule{2-8}
		&\textbf{None}& \textbf{JPEG} &\textbf{Brightness}&\textbf{GauBlur}&\textbf{GauNoise} &\textbf{MedFilter}&\textbf{Resize} \\
    \midrule
    DwtDct~\cite{cox2007digital} &0.832 / 0.790&0.000 / 0.509&0.292 / 0.579&0.002 / 0.509&0.814 / 0.770&0.000 / 0.521&0.218 / 0.594\\
    DwtDctSvd~\cite{cox2007digital} &\textbf{1.000} / \underline{0.999}&\underline{0.998} / 0.870&0.186 / 0.503&0.904 / 0.771&\underline{0.998} / \underline{0.979}&\underline{0.998} / 0.937&\underline{0.996} / 0.979\\
    
    RivaGAN~\cite{zhang2019robust} &\underline{0.970} / 0.993&0.844 / 0.964&0.764 / 0.936&0.718 / 0.942&0.738 / 0.933&0.914 / 0.970&0.940 / 0.986\\
    
    Stable Signature~\cite{fernandez2023stable} &\textbf{1.000} / 0.998&0.856 / 0.889&0.886 / 0.937&0.000 / 0.413&0.948 / 0.973&0.000 / 0.647&0.332 / 0.806\\
    
    TRW~\cite{wen2023tree} &\textbf{1.000} / -&\textbf{1.000} / -&0.906 / -&\textbf{1.000} / -&0.730 / -&\textbf{1.000} / -&\textbf{1.000} / -\\
    
    Gaussian Shading~\cite{gs} &\textbf{1.000} / \textbf{1.000}&\textbf{1.000} / \textbf{0.999}&\textbf{0.998} / \textbf{0.999}&\textbf{1.000} / \textbf{0.998}&\textbf{1.000} / \textbf{0.998}&\textbf{1.000} / \textbf{0.998}&\textbf{1.000} / \textbf{1.000}\\
    
    PRCW~\cite{prc} &\textbf{1.000} / \textbf{1.000}&0.950 / 0.972&0.964 / 0.984&0.846 / 0.933&0.878 / 0.943&0.886 / 0.943&0.988 / 0.992\\
    
    \textbf{Gaussian Shading++} &\textbf{1.000} / \textbf{1.000}&0.974 / \underline{0.984}&\underline{0.974} / \underline{0.986}&\underline{0.974} / \underline{0.972}&0.918 / 0.956&0.964 / \underline{0.974}&\underline{0.996} / \underline{0.996}\\
    \bottomrule
    
  \end{tabular}
\end{table*}

\begin{figure*}
    \centering
    {\label{Fig:tpr_legend}\includegraphics[width=.16\linewidth]{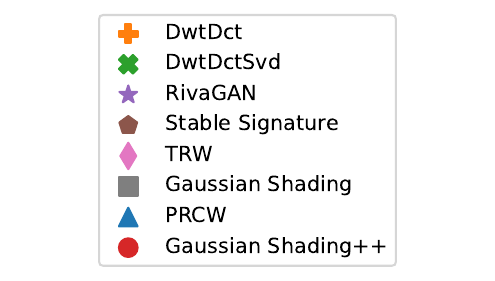}}
    \subfloat[VAE B]{\label{Fig:bmshj_tpr}\includegraphics[width=.15\linewidth]{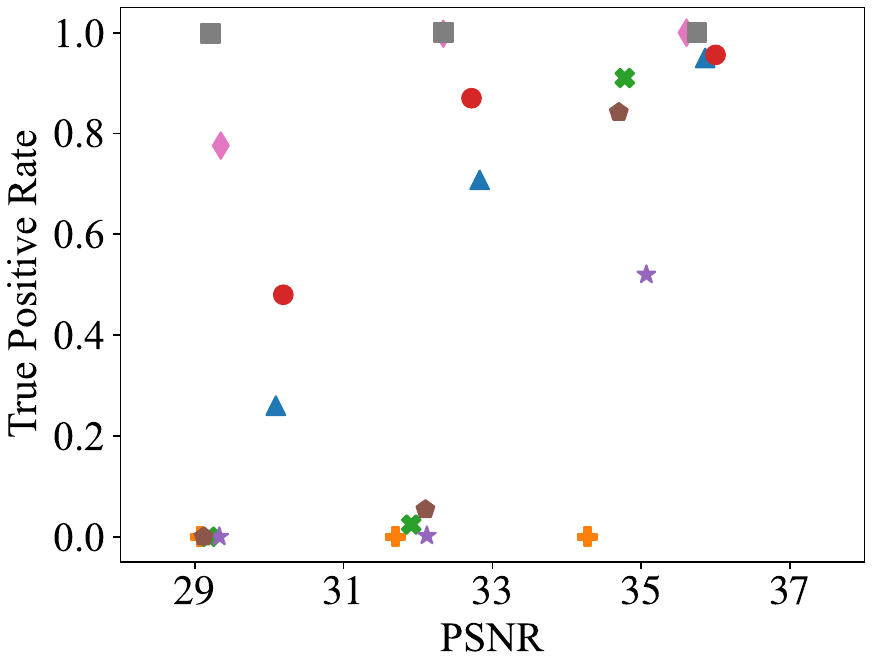}}\hspace{0.005\linewidth}
    \subfloat[VAE C]{\label{Fig:cheng_tpr}\includegraphics[width=.15\linewidth]{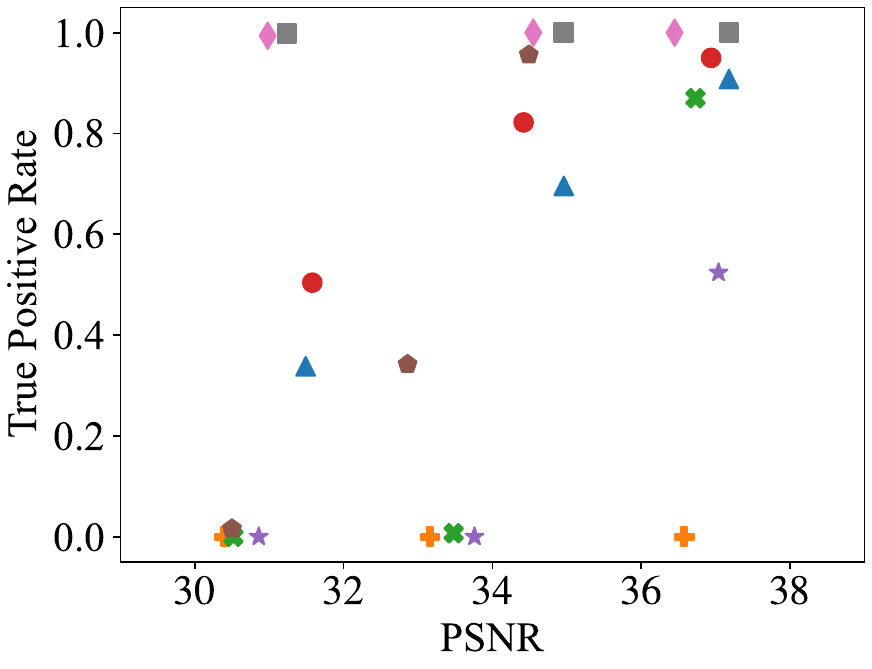}}\hspace{0.005\linewidth}
    \subfloat[VQ VAE]{\label{Fig:vq_tpr}\includegraphics[width=.15\linewidth]{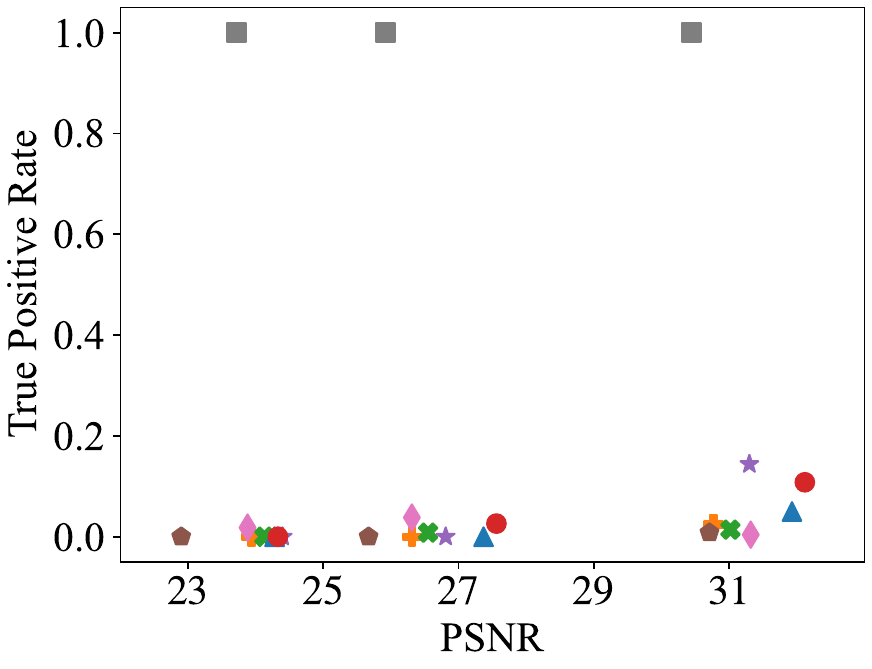}}\hspace{0.005\linewidth}
 \subfloat[KL VAE]{\label{Fig:kl_tpr}\includegraphics[width=.15\linewidth]{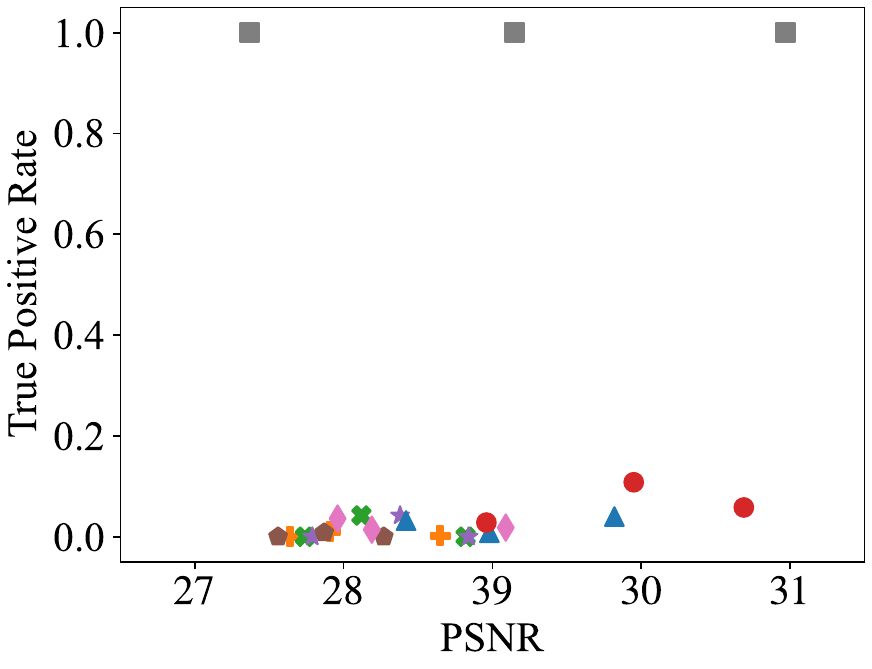}}\hspace{0.005\linewidth}
 \subfloat[SD V2.1]{\label{Fig:diffusion_tpr}\includegraphics[width=.15\linewidth]{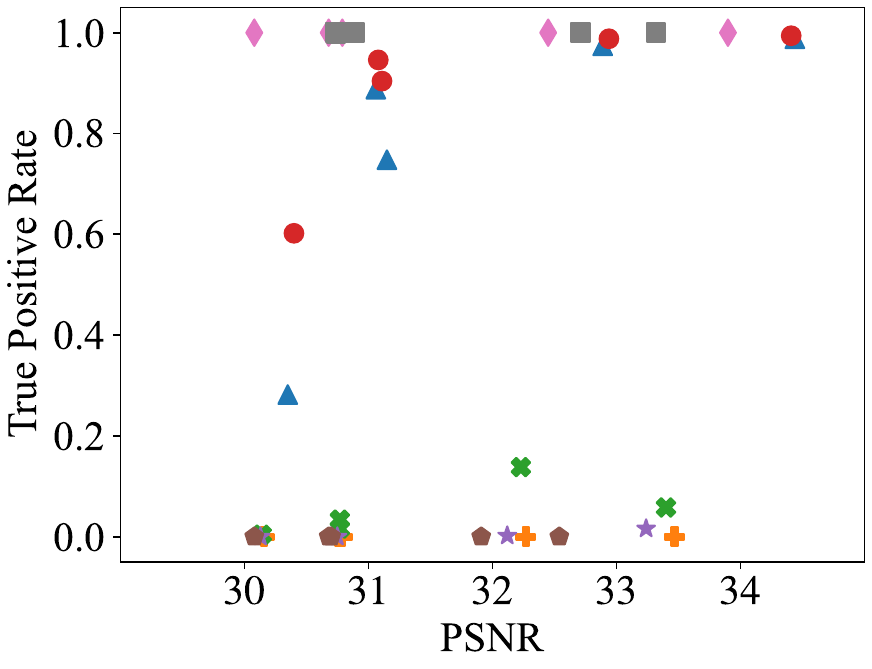}}\hspace{0.005\linewidth}

 {\label{Fig:acc_legend}\includegraphics[width=.16\linewidth]{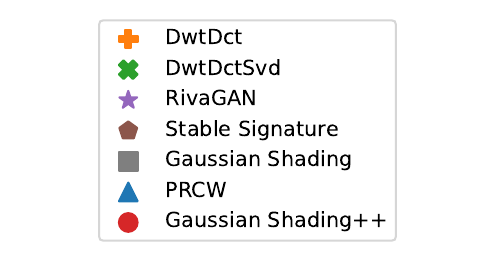}}
 \subfloat[VAE B]{\label{Fig:bmshj_acc}\includegraphics[width=.15\linewidth]{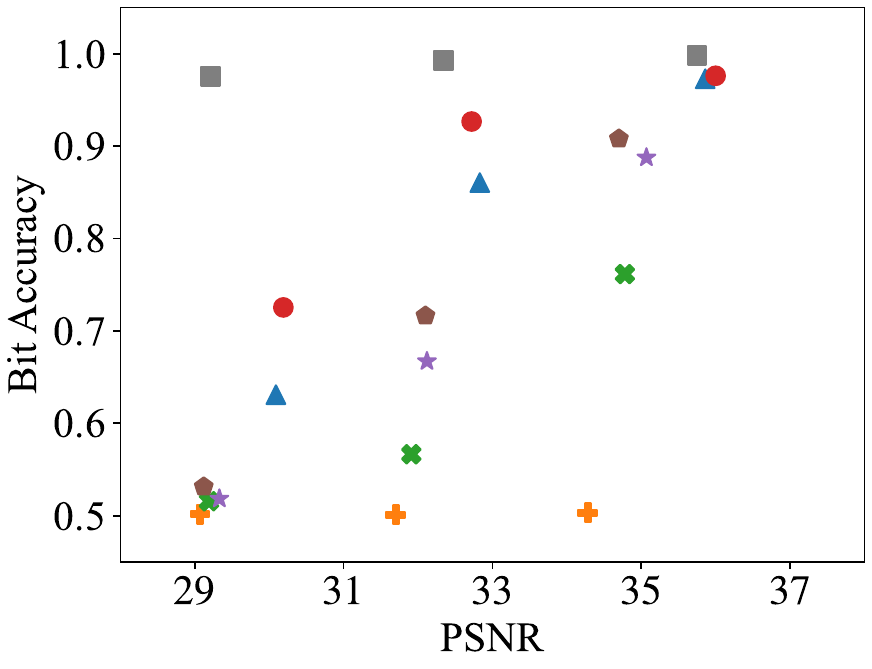}}\hspace{0.005\linewidth}
 \subfloat[VAE C]{\label{Fig:cheng_acc}\includegraphics[width=.15\linewidth]{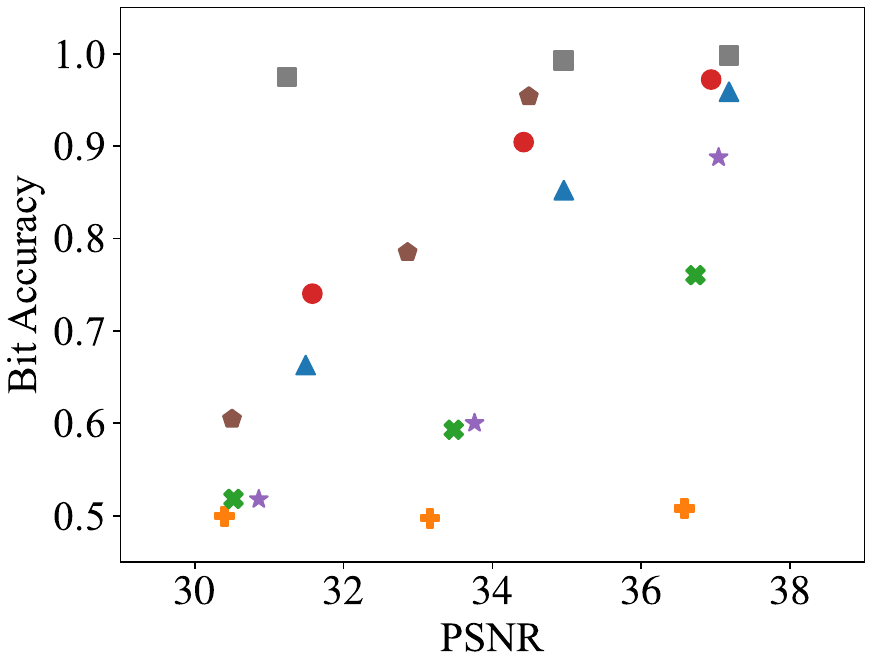}}\hspace{0.005\linewidth}
 \subfloat[VQ VAE]{\label{Fig:vq_acc}\includegraphics[width=.15\linewidth]{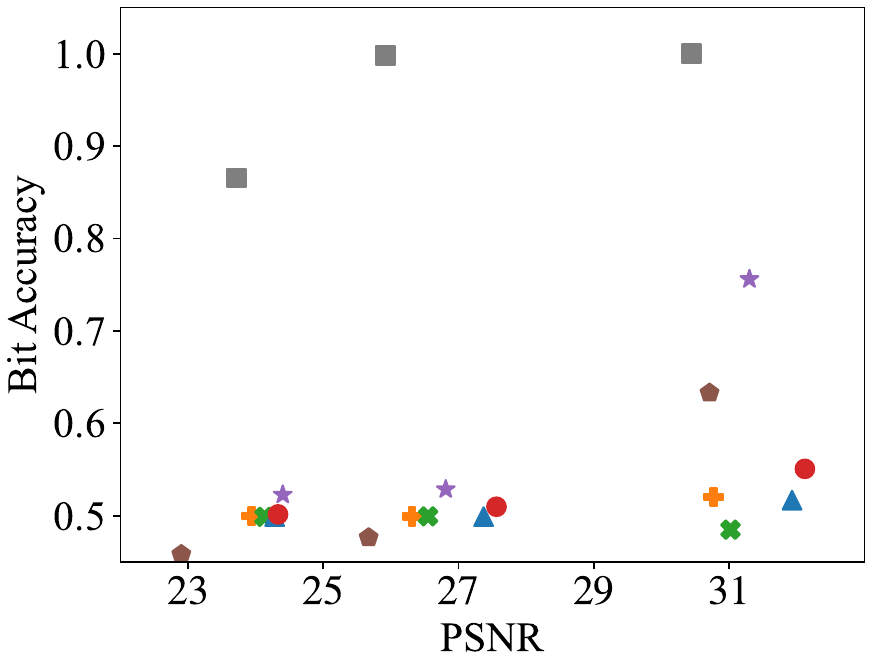}}\hspace{0.005\linewidth}
 \subfloat[KL VAE]{\label{Fig:kl_acc}\includegraphics[width=.15\linewidth]{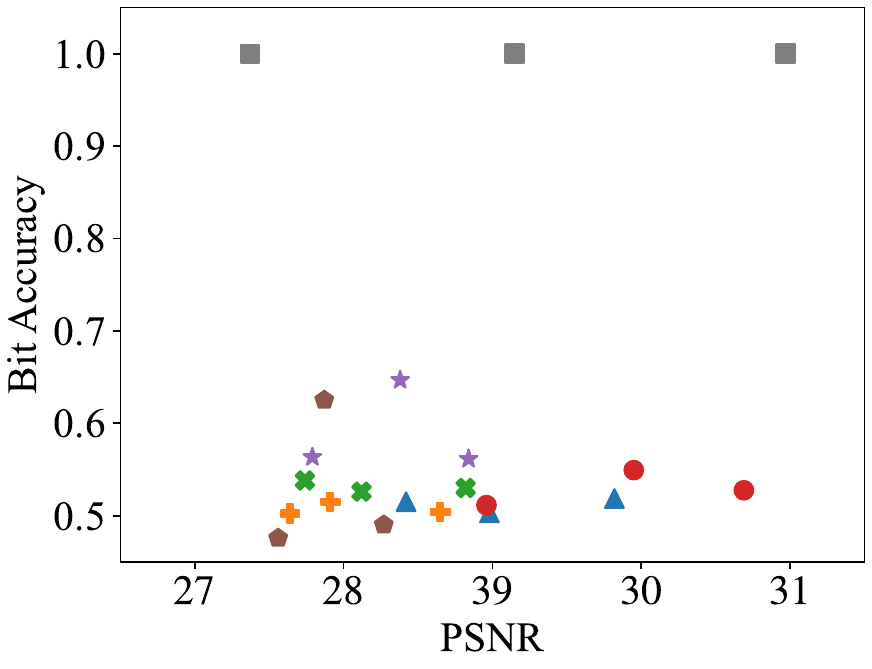}}\hspace{0.005\linewidth}
 \subfloat[SD V2.1]{\label{Fig:diffusion_acc}\includegraphics[width=.15\linewidth]{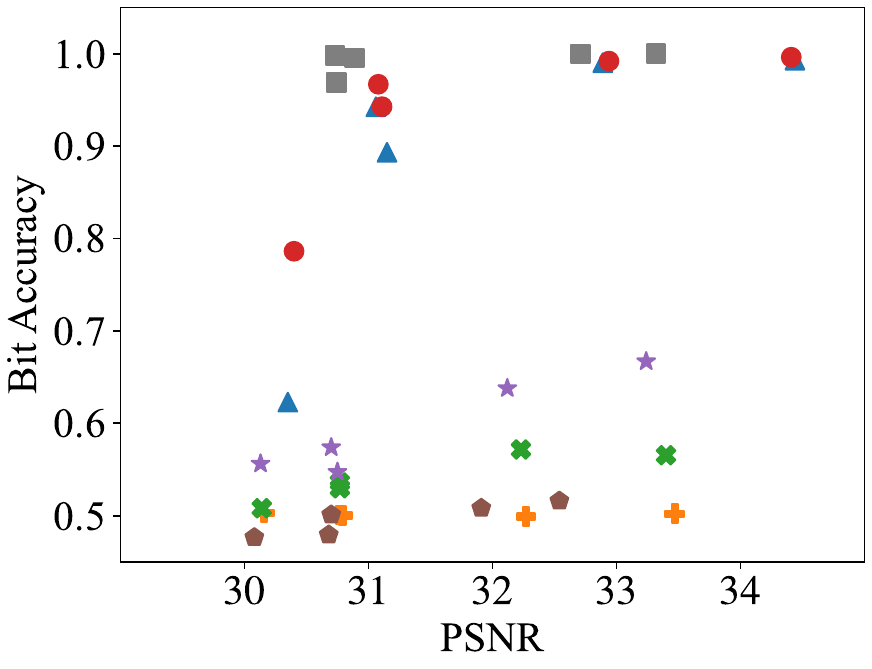}}\hspace{0.005\linewidth}
    \caption{Comparison of robustness under removal attacks. The first row presents the TPR of watermark methods, while the second row shows their bit accuracy.
    For VAE B~\cite{balle2018variational} and VAE C~\cite{cheng2020learned}, we select three strength levels $ quality= 2, 4,6 $.
For VQ VAE~\cite{esser2021taming} and KL VAE~\cite{rombach2022high}, we choose three strength levels $f=4, 8, 16$.
For SD V2.1~\cite{rombach2022high}, we set five removal steps $t_{step}=10, 25, 50, 100, 200$.
    }
    \label{fig:com_attack}
\end{figure*}

\subsection{Comparison to Baselines}
In this section, we compare the performance of Gaussian
Shading++ with baselines on SD V2.1. Robustness is evaluated from two aspects: traditional noise distortions and removal attacks. Additionally, performance-lossless characteristic is demonstrated in terms of visual quality and the distribution of latent representations. Finally, we compare PRCW~\cite{prc} by evaluating the performance under varying \textit{guidance\_scale}.

\subsubsection{Robustness to Noise}
We conduct tests on 500 generated images for each
method respectively. As shown in Tab.~\ref{tab:com_noise}, Gaussian Shading++ achieves near-optimal performance under various noise conditions, second only to Gaussian Shading~\cite{gs}. It outperforms the state-of-the-art method PRCW~\cite{prc}, and even improves bit accuracy by up to 4\% under Gaussian Blur. This is because, on the one hand, the insufficient robustness of the PRC Channel leads to decoding failure of the $seed$, and the fewer watermark copies in the GS Channel compared to Gaussian Shading~\cite{gs} result in reduced robustness. On the other hand, the GS Channel in Gaussian Shading++ exhibits stronger robustness than PRCW~\cite{prc} when the PRC Channel successfully decodes. As a result, the overall performance of Gaussian Shading++ is between that of Gaussian Shading~\cite{gs} and PRCW~\cite{prc}. Considering the challenges of key management in Gaussian Shading~\cite{gs}, Gaussian Shading++ is the optimal solution for practical deployment.

\subsubsection{Removal Attack}
Besides traditional noise distortions, recent studies~\cite{zhao2024invisible,an2024waves} emphasize the necessity for watermarking methods to resist removal attacks.  The main idea is to reconstruct watermarked images through networks to erase watermark signals. We consider two types of removal networks and evaluate five methods: VAE (VAE B~\cite{balle2018variational}, VAE C~\cite{cheng2020learned}, VQ VAE~\cite{esser2021taming}, KL VAE~\cite{rombach2022high}), and SD V2.1~\cite{rombach2022high}. For each watermarking method, we evaluate 500 images under different levels of each removal attack. The experimental results are shown in Fig.~\ref{fig:com_attack}. Post-processing-based~\cite{cox2007digital, zhang2019robust} and fine-tuning-based~\cite{fernandez2023stable} methods struggle to resist removal attacks since their watermark signals are encoded in the residuals. For latent-representation-based methods, Gaussian Shading~\cite{gs} demonstrates strong robustness against all removal attacks, whereas TRW~\cite{wen2023tree}, PRCW~\cite{prc}, and Gaussian Shading++ struggle to resist VAE-based removal attacks. This is because, although the GS Channel exhibits strong robustness, the PRC Channel consistently decodes an incorrect $seed$, leading to extraction errors in the GS Channel.

\begin{table}[htbp]
  \centering
    \caption{The $t$-test results for visual quality on SD V2.1. \textbf{Bold} represents the best, \underline{underline} represents the second best.
    }
   \resizebox{0.48\textwidth}{!}{
  \begin{tabular}{@{}ccc@{}}
    \toprule
    \multirow{2}{*}{\textbf{Methods}} &
		\multicolumn{2}{c}{\textbf{Metrics}} \\
		\cmidrule{2-3}
		&\textbf{FID ($t$-value $\downarrow$)}  & \textbf{CLIP-Score ($t$-value $\downarrow$)}\\
    \midrule
    Stable Diffusion&25.23$\pm$.18&0.3629$\pm$.0006\\
    \midrule
    DwtDct~\cite{cox2007digital} &24.97$\pm$.19 (3.026)&0.3617$\pm$.0007 (3.045)\\
    DwtDctSvd~\cite{cox2007digital} &24.45$\pm$.22 (8.253)&0.3609$\pm$.0009 (4.452)\\
    RivaGAN~\cite{zhang2019robust} &24.24$\pm$.16 (12.29)&0.3611$\pm$.0009 (4.259)\\
    Stable Signature~\cite{fernandez2023stable} &25.45$\pm$.18 (2.477)&0.3622$\pm$.0027 (\underline{0.7066})\\
    TRW~\cite{wen2023tree} &25.43$\pm$.13 (2.581)&0.3632$\pm$.0006 (0.8278)\\
    
    Gaussian Shading~\cite{gs} &25.15$\pm$.16 (1.005)&0.3624$\pm$.0006 (1.469)\\
    
    PRCW~\cite{prc} &25.22$\pm$.15 (\underline{0.1636})&0.3624$\pm$.0007 (0.8220)\\
  
    \textbf{Gaussian Shading++} &25.22$\pm$.10 (\textbf{0.1597})&0.3626$\pm$.0011 (\textbf{0.6228}) \\
    \bottomrule
    
  \end{tabular}}

  \label{tab:ttest}
\end{table}

\subsubsection{Performance Bias in Visual Quality} \label{sec:visual_quality}

To assess the performance bias from watermark embedding, we conduct a $t$-test. We generate $50,000/10,000$ images with SD V2.1 for each watermarking method, split into 10 groups of $5,000/1,000$ images. The FID~\cite{heusel2017gans}/CLIP-Score~\cite{radford2021learning} is computed for each group, and the mean $\mu_s$ is obtained. Similarly, we generate $50,000/10,000$ non-watermarked images, evaluate FID/CLIP-Score across 10 groups, and calculate $\mu_0$.
For FID, $5,000$ images are sampled from MS-COCO-2017~\cite{lin2014microsoft} validation set for score computation. For CLIP-Score, OpenCLIP-ViT-G~\cite{cherti2023reproducible} is used to assess image-text relevance.

If the model performance is maintained, then $\mu_s$ and $\mu_0$ should be statistically close to each other. Therefore, the hypotheses are $H_0 : \mu_s = \mu_0, H_1 : \mu_s \neq \mu_0$.
A lower $t$-value suggests a higher likelihood of $H_0$ holding. If it exceeds a threshold, $H_0$ is rejected, indicating model performance degradation. 
As shown in Tab.~\ref{tab:ttest}, most existing method~\cite{cox2007digital, zhang2019robust,fernandez2023stable,wen2023tree,gs} significantly impact model generation performance from a statistical perspective. In contrast, Gaussian Shading++ achieves the smallest $t$-value for both FID and CLIP-Score, indirectly confirming its performance-lossless characteristic.

\begin{table*}
  \centering
    \caption{Comparison of Latent Distributions Under Different Statistics and Statistical Tests.
We highlight clearly abnormal statistics and failed normality tests in \textbf{bold}, while \underline{underline} indicates tests with low confidence (i.e., small p-values close to zero).}
  \label{tab:latent distribution}
   \resizebox{\textwidth}{!}{
  \begin{tabular}{@{}cccccccc@{}}
    \toprule
    \multirow{2}{*}{\textbf{Methods}} &
		\multicolumn{7}{c}{\textbf{Statistics and Statistical Tests (Statistic / p-value $\uparrow$)}} \\
		\cmidrule{2-8}
		&\textbf{Mean}& \textbf{Frobenius norm} &\textbf{K-S}&\textbf{Shapiro–Wilk}&\textbf{Cramér–von Mises } &\textbf{Jarque–Bera}&\textbf{D’Agostino’s K-squared} \\
    \midrule

    Standard Sampling &-4.4599e-6 / 0.9990&57.9245 / - &1.5989e-5 / 0.8910&0.9999 / 1.0000 &0.0212 / 0.9957&0.0435 / 0.9785&0.0435 / 0.9785\\
    \midrule
    Gaussian Shading~\cite{gs} &\textbf{0.3989} / \textbf{0.0000}&\textbf{183.4432} / -&\textbf{0.5000} / \textbf{0.0000}&\textbf{0.7335} / \textbf{0.0000}&\textbf{5.4613e+7} / \textbf{0.0000}&\textbf{9.0466e+8} / \textbf{0.0000}&\textbf{3.7927e+8} / \textbf{0.0000}\\
    
    TRW~\cite{wen2023tree} &-8.6425e-6 / 0.9980&57.9628 / - &\textbf{0.0003} / \textbf{0.0000}&0.9999 / 1.0000 &\textbf{60.4934} / \textbf{0.0000}&0.0055 / 0.9972&0.0055 / 0.9972\\
    
    PRCW~\cite{prc} &1.966e-5 / 0.9956&57.9865 / -&2.3074e-5 / 0.4877&0.9999 / 1.0000&0.1636 / 0.3507&\underline{5.2902} / \underline{0.0710}&\underline{5.2904} / \underline{0.0710}\\
    
    \textbf{Gaussian Shading++} &-2.6283e-5 / 0.9941&57.9882 / -&2.2996 / 0.4922&0.9999 / 1.0000&0.1093 / 0.5413&0.8647 / 0.6489&0.8647 / 0.6489\\
    \bottomrule
    
  \end{tabular}}
\end{table*}

\subsubsection{Latent Distribution}\label{sec:Latent Distribution}

To evaluate whether the latent space distribution of our watermarking scheme conforms to the standard Gaussian distribution \( \mathcal{N}(0, I) \), we generated a set of latent vectors containing a fixed watermark message. We computed the sample mean and the Frobenius norm of the covariance matrix, and conducted five hypothesis tests to assess normality: the Kolmogorov–Smirnov (K-S) test ~\cite{smirnov1939estimation}, the Shapiro–Wilk test ~\cite{shapiro1965analysis}, the Cramér–von Mises test ~\cite{anderson1962distribution}, the Jarque–Bera test ~\cite{jarque1987test}, and D’Agostino’s K-squared test ~\cite{d1990suggestion}. Specifically, the K-S test measures the maximum deviation between the empirical and theoretical cumulative distribution functions; the Shapiro–Wilk test evaluates the correlation between the data and expected normal scores; the Cramér–von Mises test quantifies the integrated squared difference between distributions; and the Jarque–Bera and D’Agostino’s tests assess skewness and kurtosis.

Our baseline methods include: standard Gaussian sampling via `torch.randn', Gaussian shading~\cite{gs} with a fixed key, TRW~\cite{wen2023tree}, and PRCW~\cite{prc}. For all comparison methods, we fix the same watermark message and generate \( 80{,}000 \) latent vectors of shape \( [4, 64, 64] \) for evaluation.

The experimental results in Tab.~\ref{tab:latent distribution} show that Gaussian Shading ~\cite{gs}, under a fixed key and fixed watermark message, fails all normality tests. This is expected, as the output of its watermark ciphertext is deterministic under these fixed conditions, causing the latent space samples to concentrate in a limited subspace. 
TRW ~\cite{wen2023tree} fails two of the tests—the Kolmogorov–Smirnov and the Cramér–von Mises tests—which is also reasonable, as TRW does not explicitly enforce distributional preservation in the latent space. 

PRCW ~\cite{prc} passes all the tests; however, its \(p\)-values in the Jarque–Bera and D’Agostino’s K-squared tests are relatively low, representing a weak pass. We hypothesize that this is because the watermark ciphertext in PRCW takes the form \( G \cdot (testbits \,\|\, m \,\|\, r) \oplus e \). When both $testbits$ and \( m \) are fixed, the only source of randomness comes from \( r \), which may be insufficient under large sample sizes and could lead to slight deviations from normality. 

In contrast, our proposed method passes all normality tests perfectly. Since the watermark header is computed as \( G \cdot seed \oplus e \), and \( seed \) is truly sampled at random, the resulting pseudorandomness is theoretically guaranteed by the pseudorandomness of the underlying PRC construction.

\begin{table*}
  \centering
    \caption{Robustness comparison under different guidance\_scale during generation. We control the FPR at $10^{-6}$, and evaluate the TPR / bit accuracy for SD V2.1. The guidance\_scale is fixed to $3$ during inversion. With no background representing PRCW~\cite{prc} and a gray background representing Gaussian Shading++. }
  \label{tab:com_gs}
  \begin{tabular}{@{}cccccccc@{}}
    \toprule
    \multirow{2}{*}{\textbf{Guidance Scale}} &
		\multicolumn{7}{c}{\textbf{Noise}} \\
		\cmidrule{2-8}
		&\textbf{None}& \textbf{JPEG} &\textbf{Brightness}&\textbf{GauBlur}&\textbf{GauNoise} &\textbf{MedFilter}&\textbf{Resize} \\
    \midrule
    \multirow{2}{*}{3}
    &\textbf{1.000} / \textbf{1.000}
&0.992 / 0.996
&0.982 / 0.993
&0.976 / \textbf{0.999}
&0.996 / 0.997
&0.98 / 0.992
&0.998 / 0.999\\
 &\cellcolor{gray!30}\textbf{1.000} / \textbf{1.000}
&\cellcolor{gray!30}\textbf{0.994} / \textbf{0.997}
&\cellcolor{gray!30}\textbf{0.990} /\textbf{ 0.995}
&\cellcolor{gray!30}\textbf{0.998} / 0.993
&\cellcolor{gray!30}\textbf{0.998} / \textbf{0.999}
&\cellcolor{gray!30}\textbf{0.994} / \textbf{0.994}
&\cellcolor{gray!30}\textbf{1.000} / \textbf{1.000}\\

       \multirow{2}{*}{6}
    &\textbf{1.000} / \textbf{1.000}
&0.964 / 0.984
&0.964 / 0.984
&0.922 / 0.964
&0.946 / 0.972
&0.910 / 0.964
&0.994 / 0.996\\
   &\cellcolor{gray!30}\textbf{1.000} / \textbf{1.000}
&\cellcolor{gray!30}\textbf{0.982} / \textbf{0.990}
&\cellcolor{gray!30}\textbf{0.982} / \textbf{0.990}
&\cellcolor{gray!30}\textbf{0.984} / \textbf{0.980}
&\cellcolor{gray!30}\textbf{0.970} / \textbf{0.984}
&\cellcolor{gray!30}\textbf{0.974} / \textbf{0.980}
&\cellcolor{gray!30}\textbf{0.996} / \textbf{0.998}\\

    \multirow{2}{*}{9}
    &\textbf{1.000} / \textbf{1.000}
&0.898 / 0.953
&0.952 / 0.977
&0.784 / 0.907
&0.790 / 0.892
&0.832 / 0.919
&0.980 / 0.992\\
   &\cellcolor{gray!30}\textbf{1.000} / \textbf{1.000}
&\cellcolor{gray!30}\textbf{0.966} / \textbf{0.979}
&\cellcolor{gray!30}\textbf{0.964} / \textbf{0.981}
&\cellcolor{gray!30}\textbf{0.938} / \textbf{0.952}
&\cellcolor{gray!30}\textbf{0.848} / \textbf{0.919}
&\cellcolor{gray!30}\textbf{0.938} / \textbf{0.960}
&\cellcolor{gray!30}\textbf{0.996} / \textbf{0.997}\\

    \multirow{2}{*}{12}
    &0.996 / 0.998
&0.824 / 0.917
&0.924 / 0.963
&0.622 / 0.813
&0.594 / 0.798
&0.736 / 0.875
&0.948 / 0.976\\
   &\cellcolor{gray!30}\textbf{1.000} / \textbf{1.000}
&\cellcolor{gray!30}\textbf{0.914} / \textbf{0.952}
&\cellcolor{gray!30}\textbf{0.960} / \textbf{0.978}
&\cellcolor{gray!30}\textbf{0.822} / \textbf{0.919}
&\cellcolor{gray!30}\textbf{0.724} / \textbf{0.885}
&\cellcolor{gray!30}\textbf{0.890} / \textbf{0.932}
&\cellcolor{gray!30}\textbf{0.972} / \textbf{0.984}\\

    \multirow{2}{*}{15}
    &0.994 / \textbf{0.998}
&0.746 / 0.877
&0.874 / 0.945
&0.434 / 0.717
&0.434 / 0.709
&0.582 / 0.806
&0.900 / 0.960\\
   &\cellcolor{gray!30}\textbf{0.996} / \textbf{0.998}
&\cellcolor{gray!30}\textbf{0.838} / \textbf{0.912}
&\cellcolor{gray!30}\textbf{0.922} / \textbf{0.958}
&\cellcolor{gray!30}\textbf{0.782} / \textbf{0.867}
&\cellcolor{gray!30}\textbf{0.536} / \textbf{0.761}
&\cellcolor{gray!30}\textbf{0.802} / \textbf{0.886}
&\cellcolor{gray!30}\textbf{0.974} / \textbf{0.982}\\
\bottomrule
  \end{tabular}
\end{table*}

\subsubsection{Performance across Guidance\_scale}
Our experiments have confirmed that Gaussian Shading++ achieves performance-lossless watermarking with a fixed key and enables third-party verifiability. To further validate its practicality, evaluating its performance under varying \textit{guidance\_scale} is necessary. We primarily compare Gaussian Shading++ with PRCW~\cite{prc}. For each value of \textit{guidance\_scale} set to $3, 6, 9, 12, 15$, we generate $500$ watermarked images for evaluation. Since the generation parameters are unknown during the inversion phase, the \textit{guidance\_scale} is fixed at $3$ for all inverse processes. The results in Tab.~\ref{tab:com_gs} demonstrate that Gaussian Shading++ consistently outperforms PRCW~\cite{prc} across all settings. This performance gain is attributed to our modeling of the AWGN Channel across the entire generation and inversion process, and the use of soft decision decoding. These results highlight the suitability of Gaussian Shading++ for real-world deployments where generation parameters may vary.

\begin{table*}[t]
    \centering
        \caption{Ablation study on key modules of Gaussian Shading++. We control the FPR at $10^{-6}$, and evaluate the TPR/bit accuracy under various noise distortions for SD V2.1.\textbf{Bold} represents the best, \underline{underline} represents the second best. }
    \label{tab:ab}

    \begin{tabular}{c c c | c c c c c c c}
        \toprule
        \multicolumn{3}{c|}{\textbf{Module}} & \multicolumn{7}{c}{\textbf{Noise}} \\
        \cmidrule(lr){1-3} \cmidrule(lr){4-10}
        \textbf{Encode} & \textbf{Decode} & \textbf{PRC Loc} & \textbf{None} & \textbf{JPEG} & \textbf{Brightness} & \textbf{GauBlur} & \textbf{GauNoise} & \textbf{MedFilter} & \textbf{Resize} \\
        \midrule
        
        \textbf{BCH} & \textbf{SDD}  & \textbf{front} 
        &\underline{0.994} / \underline{0.997}
&0.588 / 0.794
&0.820 / 0.910
&0.008 / 0.503
&0.644 / 0.820
&0.062 / 0.531
&0.732 / 0.866\\
        
        \textbf{PRC}  & \textbf{HDD} & \textbf{front} 
        &\textbf{1.000} / \textbf{1.000}
&0.966 / \underline{0.977}
&0.980 / \underline{0.987}
&0.970 / 0.948
&\underline{0.914} / 0.952
&\textbf{0.964} / 0.959
&\textbf{0.996} / \underline{0.995} \\

       \textbf{PRC}  & \textbf{SDD-LLR} & \textbf{front} 
        &\textbf{1.000} / \textbf{1.000}
&\underline{0.970} / \textbf{0.984}
&\textbf{0.990} / \textbf{0.994}
&\textbf{0.974} / \textbf{0.972}
&0.910 / \underline{0.953}
&\textbf{0.964} / \textbf{0.975}
&0.990 / 0.994\\
        
        \textbf{PRC} & \textbf{SDD} & \textbf{top} 
        & \textbf{1.000} / \textbf{1.000}
&0.940 / 0.967
&0.960 / 0.979
&0.906 / 0.946
&0.872 / 0.932
&\underline{0.936} / 0.962
&0.992 / \underline{0.995} \\
        
        \textbf{PRC} & \textbf{SDD} & \textbf{bottom} 
        & \textbf{1.000} / \textbf{1.000}
&0.958 / 0.974
&0.966 / 0.981
&0.934 / \underline{0.959}
&\underline{0.914} / 0.952
&0.932 / 0.959
&\underline{0.994} / \textbf{0.996} \\
        
        \textbf{PRC} & \textbf{SDD} & \textbf{back} 
        & \textbf{1.000} / \textbf{1.000}
&0.948 / 0.973
&0.966 / 0.982
&0.846 / 0.920
&0.888 / 0.942
&0.906 / 0.950
&0.988 / 0.993 \\
\midrule
        \textbf{PRC} & \textbf{SDD} & \textbf{front} 
        & \textbf{1.000} / \textbf{1.000}
&\textbf{0.974} / \textbf{0.984}
&\underline{0.974} / 0.986
&\textbf{0.974} / \textbf{0.972}
&\textbf{0.918} / \textbf{0.956}
&\textbf{0.964} / \underline{0.974}
&\textbf{0.996} / \textbf{0.996} \\
        \bottomrule
    \end{tabular}

\end{table*}

\begin{figure*}[t]
    \centering
        
    \subfloat[JPEG]{\label{Fig:jp}\includegraphics[width=.155\linewidth]{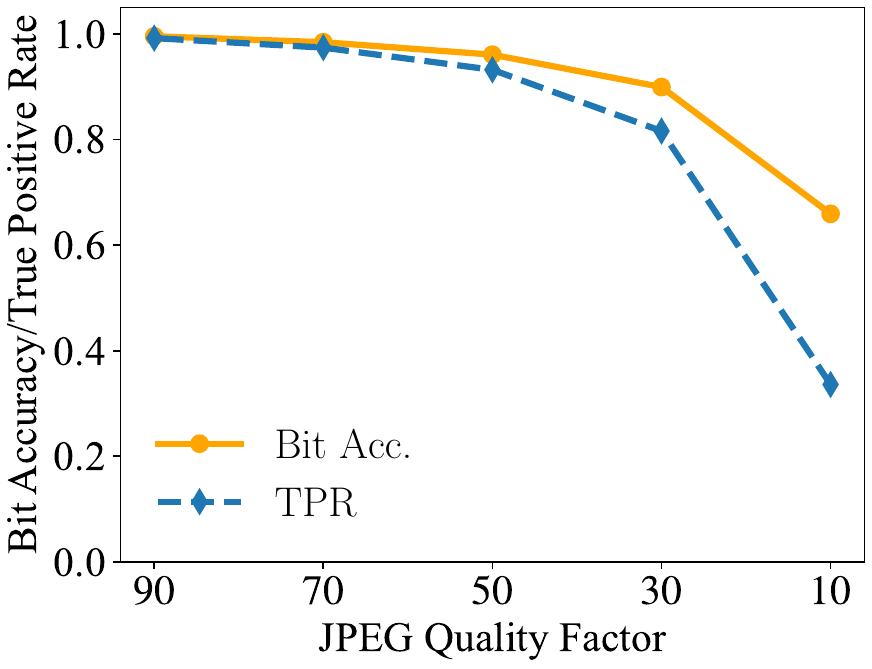}}\hspace{0.005\linewidth}
     \subfloat[Brightness]{\label{Fig:br}\includegraphics[width=.155\linewidth]{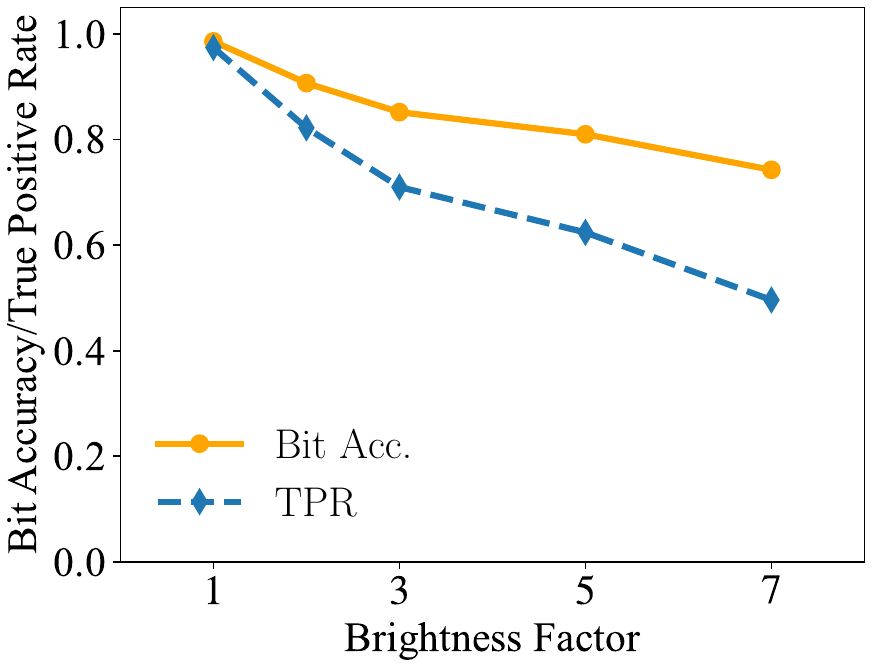}}\hspace{0.005\linewidth}
 \subfloat[GauBlur]{\label{Fig:gb}\includegraphics[width=.155\linewidth]{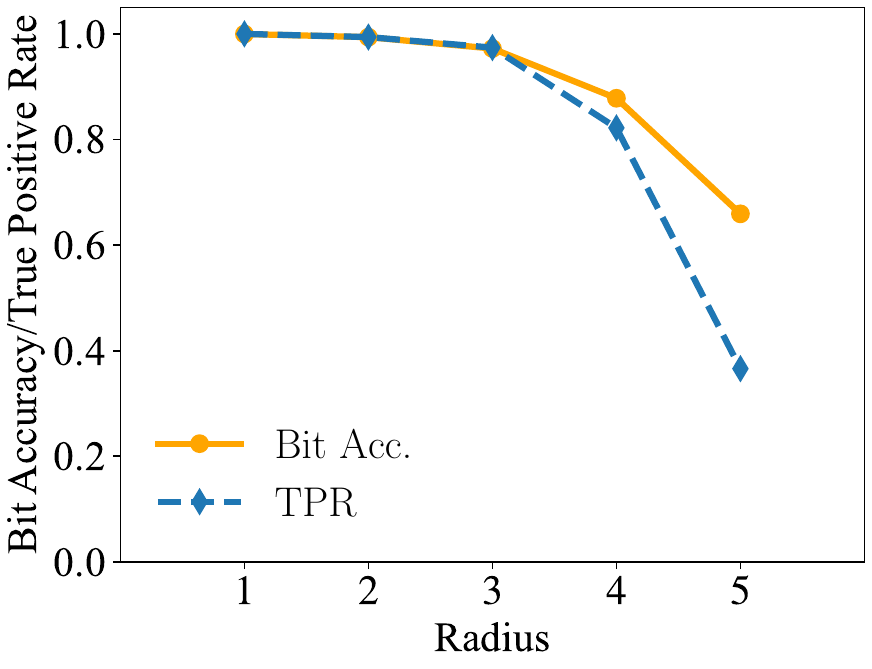}}\hspace{0.005\linewidth}
  \subfloat[GauNoise]{\label{Fig:gn}\includegraphics[width=.155\linewidth]{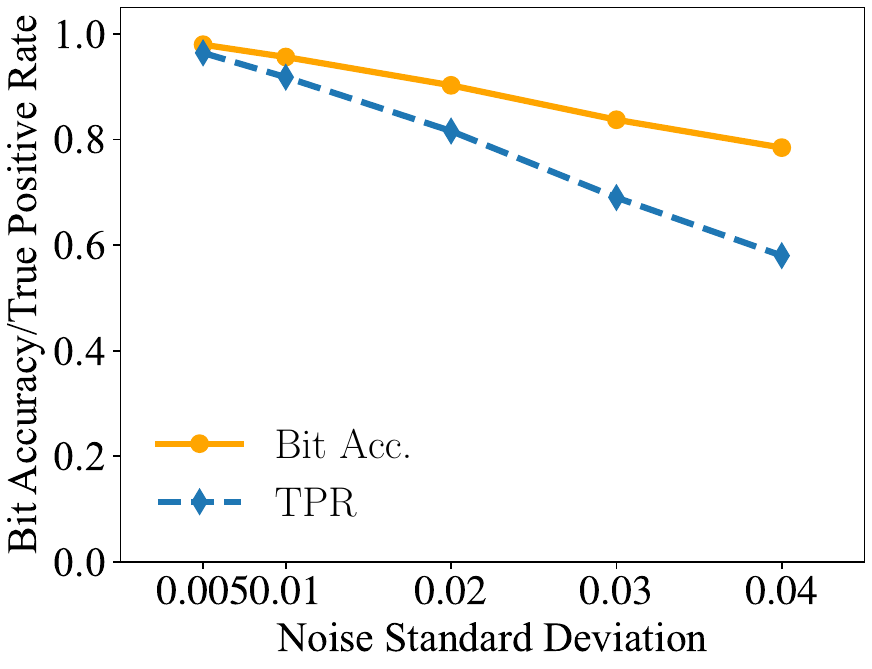}}\hspace{0.005\linewidth}
 \subfloat[MedFilter]{\label{Fig:mf}\includegraphics[width=.155\linewidth]{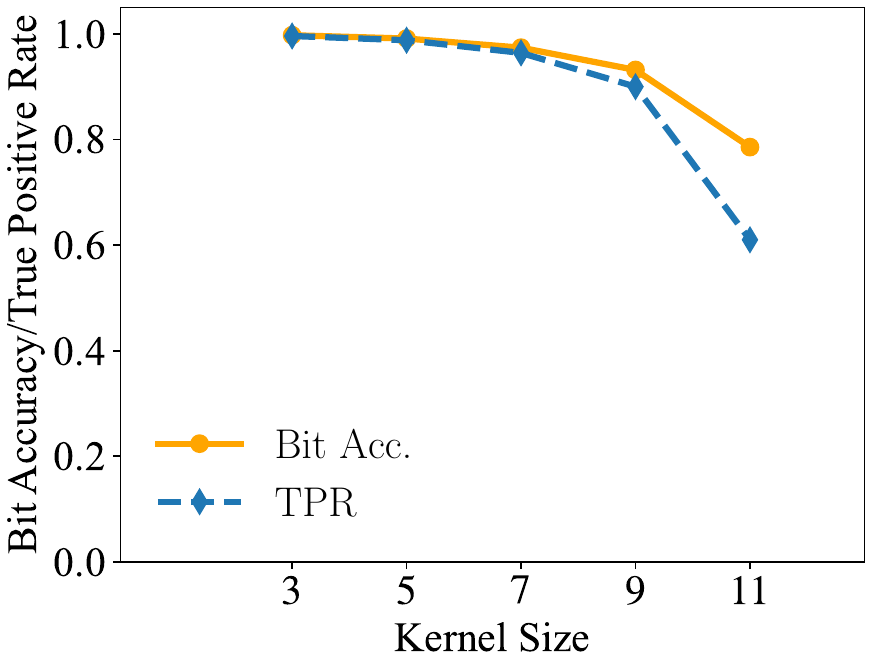}}\hspace{0.005\linewidth}
 \subfloat[Resize]{\label{Fig:rs}\includegraphics[width=.155\linewidth]{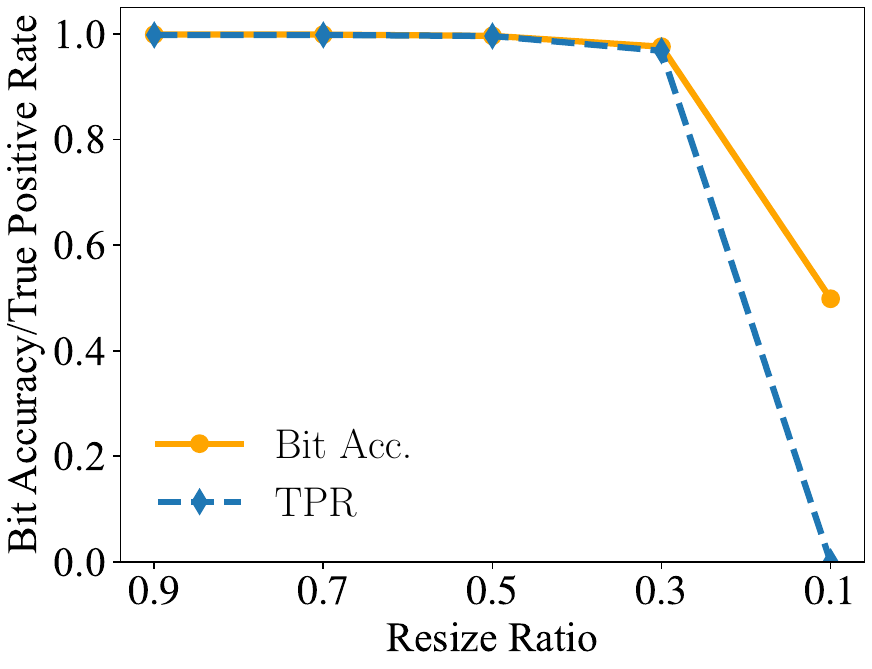}}\hspace{0.005\linewidth}

 \caption{Ablation study on different noise strengths.}
    \label{fig:ablation}

\end{figure*}

\subsection{Ablation Studies}

In this section, we conduct ablation studies on key modules of Gaussian Shading++, including the pseudorandom error-correcting codes (PRC), soft decision decoding (SDD), and the location of the PRC Channel in the latent space (PRC Loc). The last row of Tab.~\ref{tab:ab} represents the default configuration of Gaussian Shading++. Finally, we evaluate the performance of Gaussian Shading++ under various noise strengths.

\subsubsection{Effect of PRC}
To evaluate the effectiveness of encoding the $seed$ in the PRC Channel using PRC codes, we replace PRC with Bose–Chaudhuri–Hocquenghem codes (BCH)~\cite{bose1960class}. As shown in the first row of Tab.~\ref{tab:ab}, this substitution leads to a notable decline in robustness. Furthermore, BCH codes lack pseudorandomness, making it challenging to achieve provable performance-lossless.

\subsubsection{Effect of SDD}
To assess the effectiveness of the soft decision decoding, we remove the posterior estimation step in the GS Channel and instead apply hard decision decoding (HDD), as used in Gaussian Shading~\cite{gs}. Specifically, watermark bits are directly inferred from the sign of elements in $z'^s_{T}$, and final watermark recovery is performed via majority voting over bits at the same positions. As shown in the second row of Tab.~\ref{tab:ab}, using HDD leads to a noticeable decline in overall robustness, particularly under filtering-based distortions such as Gaussian Blur and Median Filtering. These results highlight the advantage of our soft decision decoding strategy.

Additionally, we evaluate the soft decision decoding method using full LLR (SDD-LLR), as defined in Eq.~\ref{eq:full_llr}. As shown in the third row of Tab.~\ref{tab:ab}, SDD-LLR, which utilizes the arctanh function, performs similarly to the soft decision decoding, which applies a first-order approximation of arctanh. Nevertheless, soft decision decoding offers reduced computational complexity.

\subsubsection{PRC Channel Location}
Since the entire latent space is divided into the PRC Channel and the GS Channel, how these two are combined is crucial. We mainly discuss the position of the PRC Channel, with the remaining part being the GS Channel. Considering the latent space dimensions of \( [4, 64, 64] \), we define four positions for the PRC Channel: top, occupying the upper half, i.e., \([0:4, 0:32, 0:64]\); bottom, occupying the lower half, i.e., \([0:4, 32:64, 0:64]\); front, occupying the first two channels, i.e., \([0:2, 0:64, 0:64]\); back, occupying the last two channels, i.e., \([2:4, 0:64, 0:64]\). The experimental results, shown in the last four rows of Tab.~\ref{tab:ab}, indicate that when the PRC Channel occupies the first two channels (front), Gaussian Shading++ achieves the best performance, which is also our default setting.

\subsubsection{Noise Strengths}
To further test the robustness, we conduct experiments using different intensities of noises. As show in  Fig.~\ref{fig:ablation}, performance declines with higher intensities. However, for Brightness, Gaussian Noise, and Median Filter, even at high intensities, the bit accuracy remains above $75\%$.

\section{ Conclusion, Limitations, and Future Work}
We propose Gaussian Shading++, a performance-lossless watermarking method for diffusion models designed for practical deployment. First, we propose a double-channel design, utilizing the PRC Channel to encode the random seed required for the pseudorandomization of the GS Channel. This enables performance-lossless watermarking when the watermark key is fixed, addressing key management challenges in real-world applications. Second, we model the entire generation and inversion process as an additive white Gaussian noise channel and propose a novel soft decision decoding strategy for the maximum likelihood decoding in REP codes, ensuring strong robustness even when generation parameters vary. Third, we introduce the public-key signature ECDSA, extending watermark verification to any third party while providing resistance against forgery attacks. Extensive experiments validate that our method outperforms state-of-the-art approaches in both robustness and performance losslessness, making it a more practical watermarking solution for real-world deployment.

Although Gaussian Shading++ demonstrates excellent performance in practical deployment scenarios, there are still some limitations. First, while the GS Channel is sufficiently robust, the performance bottleneck lies in the PRC Channel. In scenarios where PRC~\cite{christ2024pseudorandom} performs poorly, such as erasure attacks from VQ VAE~\cite{esser2021taming}, the robustness of Gaussian Shading++ is also affected. Second, the anti-forgery capability of Gaussian Shading++ in publicly verifiable scenarios still has room for improvement, as it remains vulnerable to forgery when attackers use proxy models with similar parameters. Lastly, although Exact Inversion~\cite{Hong_2024_CVPR} improves extraction accuracy, its optimization process inevitably increases the computational cost of watermark extraction.

Therefore, future work will focus on three key aspects. First, to fully leverage the performance of the GS Channel, more advanced pseudorandom coding schemes~\cite{alrabiah2024ideal} should be introduced to construct a more robust watermark header. Second, exploring denoising sampling or latent representation decoding could help develop more secure and controllable watermarking methods to enhance anti-forgery capabilities in third-party verifiable scenarios. Third, faster inversion techniques~\cite{hong2024gradient} should be incorporated to reduce the computational cost of watermark extraction without sacrificing accuracy.

\bibliography{TPAMI}

\end{document}